\title{On the Nash equilibrium of moment-matching GANs for stationary Gaussian processes}
\newcommand{\N}{\mathbb{N}}
\newcommand{\E}{\mathbb{E}}
\newcommand{\R}{\mathbb{R}}
\newcommand{\Z}{\mathbb{Z}}
\newcommand{\C}{\mathbb{C}}
\newcommand{\lb}{\langle}
\newcommand{\rb}{\rangle}
\newcommand{\om}{\omega}
\newcommand{\AL}{\mathcal{A}}
\newcommand{\BE}{\mathcal{B}}
\newcommand{\Zb}{\bar{Z}}
\newcommand{\Om}{\Omega}
\newcommand{\dotr}[1]{#1^{\bullet}} 
\newcommand{\what}{\widehat}
\newcommand{\al}{\alpha}
\newcommand{\alb}{\bar{\alpha}}
\newcommand{\ald}{\dotr{\alpha}}
\newcommand{\be}{\beta}
\newcommand{\bed}{\dotr{\beta}}
\newcommand{\la}{\lambda}
\newcommand{\lad}{\dotr{\lambda}}
\newtheorem{prop}{Proposition}
\newtheorem{defi}{Definition}
\newtheorem{assum}{Assumption}
\newtheorem{lemm}{Lemma}
\newtheorem{corr}{Corollary}
 \let\Ginclude@graphics\@org@Ginclude@graphics
\begin{document}

\maketitle

\begin{abstract}%

Generative Adversarial Networks (GANs) 
learn an implicit generative model 
from data samples through a two-player game. 
In this paper, we study the existence of Nash equilibrium 
of the game which is consistent as the number of data samples grows to infinity.
In a realizable setting where the goal is to estimate
the ground-truth generator of a stationary Gaussian process, 
we show that the existence of consistent Nash equilibrium 
depends crucially on the choice of the discriminator family. 
The discriminator defined from 
second-order statistical moments 
can result in non-existence of Nash equilibrium, 
existence of consistent non-Nash equilibrium, 
or existence and uniqueness of consistent Nash equilibrium,
depending on whether symmetry properties of the generator family
are respected. We further study empirically
the local stability and global convergence of gradient descent-ascent 
methods towards consistent equilibrium.

\end{abstract}

\begin{keywords}%
  GANs, Nash equilibrium, moment-matching, stationary process, statistical consistency 
\end{keywords}


\section{Introduction}

Estimating the probability distribution of data from finite samples is a classical 
problem in statistics and machine learning. 
Unlike conventional models based on a probability density function, 
Generative Adversarial Networks (GANs)
aim to learn a generator, 
which describes how to draw samples of model distributions,
through a two-player game with a discriminator. 
A central question in GANs is what type 
of solutions of the game is suitable 
for learning the data distribution.
The original formulation \citep{goodfellow2014generative}
shows that Nash equilibrium exists in an ideal realizable setting, 
where infinite data samples can be generated from a ground-truth generator.
However, in some unrealizable settings,
\cite{farnia20a} show that GANs may have no Nash equilibrium. 
Indeed, many research works are devoted to 
extending the notion of Nash equilibrium to 
other types of solutions such as mixed Nash equilibrium \citep{pmlr-v70-arora17a},
proximal equilibrium \citep{farnia20a}, or local minimax \citep{jin2020local}.
Nevertheless, in practice, GANs are often trained with gradient-based methods
\citep{mescheder2017numerics,brock2018large}, and it is shown 
that Nash equilibrium is typically contained in 
the limiting points of such methods \citep{Daskalakis18}. 
This motivates one to study in what situations GANs have meaningful Nash equilibrium,
and whether it is possible to find such solution using gradient-based methods.

To make this problem concrete, 
we shall focus on a special instance of GANs in the realizable setting with finite-samples.
In particular, we study a notion of consistent Nash equilibrium, 
which allows one to estimate the ground-truth generator
as the number of samples grows to infinity. 
The main contribution of this paper is to show that 
the existence of consistent Nash equilibrium 
depends crucially on the choice of the discriminator family. 
More precisely, 
in order to learn the ground-truth generator of a stationary Gaussian distribution,
we find that a suitable discriminator family based on second-order moments
which respects the symmetry property 
of stationary processes, i.e. the translational invariance, 
can result in the existence and the uniqueness of consistent Nash equilibrium.
When this is not the case, Nash equilibrium may not exist
due to a finite number of samples. 
More surprisingly, we find that there can exist consistent 
non-Nash equilibrium around which 
gradient-descent-ascent methods are nearly stable 
as the number of samples goes to infinity. 
This indicates the possibility that 
GANs training may practically converge
to non-Nash equilibrium
while still achieving good results,
as observed in \cite{Berard2020A}.
To understand why this may happen in practice remains an interesting open problem. 

This paper is organized as follows:
Section \ref{sec:pre} introduces Moment-matching GANs 
for stationary Gaussian distributions, 
as well as the notion of consistent Nash equilibrium. 
The generator is parameterized by a linear convolutional network. 
To define the discriminator, we take a moment-matching perspective 
as in MMD GANs \citep{pmlr-v37-li15,uaiMMDgan}. 
Section \ref{sec:cne} presents the main theoretical results 
about the existence of consistent Nash equilibrium
on three different families of discriminators. 
Section \ref{sec:num} 
studies numerically how gradient-based methods 
for two-player games behave on the considered GANs from 
either local or global convergence point of view. 
Section \ref{sec:conclude} discusses
the challenges to extend our results to non-Gaussian distributions.

%
%
%

\textbf{Notations:}
We use $\| \cdot \|$ to denote the Euclidean metric in a finite dimensional space. 
The identify matrix in dimension $d$ is $I_d$. 
For $x \in \R^d$, the discrete Fourier transform of $x$ is written $\what{x}$. 
It is defined by $\what{x}(\om) = \sum_{u} x(u) e^{-i \om u}$ 
for $u \in \{ 0, \cdots, d-1\}$ and $\om \in  \Om_d = \{ \frac{2 \pi \ell } {d}  | 0 \leq \ell \leq d-1 \}$. 
For a complex number $z$, $z^\ast$ denotes its complex conjugate. 
For $x \in \C^d$, we write $\tilde{x}(u) = x(-u)^\ast$. 
The inner product between $x \in \C^d$ and $y \in \C^d$ is written $\lb x,y \rb = \sum_u x(u)^\ast y(u)$.

\section{Preliminaries}\label{sec:pre}

\subsection{Moment matching GANs}

We consider a class of GANs defined by a generator $ g_\al: \R^k \to \R^d$, 
and a discriminator $f_\be : \R^d  \to \R^m$. 
They are parameterized by $\al \in \AL$ and $\be \in \BE$ 
in finite dimensional Euclidean spaces.  
In this paper, we consider the realizable setting where
there is a ground-truth model 
$g_{\alb}$ which generates the data $X = g_{\alb} (\Zb)$
from a random vector $\Zb$ for some $\alb \in \AL$. 
Our goal is to find a solution 
$(g_{\ald},f_{\bed})$ of GANs such that
$g_{\ald}$ is close to $g_{\alb}$ in terms of their probability distributions.

A moment matching GAN can be formalized as the following min-max problem, 
\begin{equation}\label{eq:V}
	\min_{\al \in \AL} \max_{\be \in \BE} 
	\| \E ( f_\be( X) ) - \E ( f_\beta ( g_\al(Z) ) ) \|^2  .
\end{equation}
We shall consider the min-max problem with $n$ finite samples
\begin{equation}\label{eq:Vn}
	\min_{\al \in \AL} \max_{\be \in \BE} V_n ( \al , \be )  = \| \E_n ( f_\be ( X) ) - \E_n ( f_\be ( g_\al (Z) ) ) \|^2  . 
\end{equation}
The empirical expectation $\E_n(f(X))$ of a function $f$ of a random variable $X$ is computed 
from $n$ i.i.d samples of $X$. 
The metric $\| \cdot \|$ in \eqref{eq:V} and \eqref{eq:Vn} is chosen such that this is an MMD GAN in the Euclidean space $\R^m$. 
It is related to various GANs, 
from the perspective of feature matching \citep{liu2017approximation}. 

\subsection{Stationary Gaussian processes}

We specify the generator $g_\al$ to 
model stationary Gaussian processes
observed in a finite and discrete-time interval. 
Without the loss of generality, we consider circular stationary process defined by 
\[
	g_\al ( Z)   = \al \star Z, \quad \al \in \AL = \R^d . 
\]
It computes a circular convolution 
between a filter $\al$ and 
a Gaussian white noise $Z \sim \mathcal{N} (0,I_d)$ 
on the interval $\{0,\cdots, d-1 \}$. 
As $d$ is finite, $g_\al ( Z)$ can be regarded as a zero-mean linear stationary process
observed on this interval \citep{Priestley}. 

To measure the closeness between $g_\al(Z)$ and $X$ (two zero-mean Gaussian distributions),
we compute the spectral norm of the difference of their covariance matrices $\Sigma_{\al} = \E( g_\al ( Z) g_\al ( Z)^\intercal )$ and $\Sigma = \E( X X^\intercal)$. This allows one to define an error to measure the quality of a generator. By definition, $X = g_{\alb} ( \Zb)$ is generated by a Gaussian white noise $\Zb$, which is independent of $Z$. 
Due to the stationarity of $g_\al(Z)$,
the spectral norm can be computed in the Fourier domain by the following classical result. 
\begin{prop}\label{propSigmaFourier}(Generator error)
	For any $\al \in \AL$, we have
	 \[
	 	\| \Sigma_{\al}  - \Sigma \| = 
		\max_{\om \in \Om_d} |  |\what{\al}  (\om) |^2 - | \what{\alb} (\om) |^2  |  . 
	 \]
\end{prop}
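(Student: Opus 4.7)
The plan is to exploit the fact that both $\Sigma_\al$ and $\Sigma$ are circulant matrices, hence jointly diagonalized by the discrete Fourier basis. Since the spectral norm of a normal matrix equals the maximum modulus of its eigenvalues, the result reduces to identifying those eigenvalues in terms of $\what{\al}$ and $\what{\alb}$.

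First I would expand the covariance: writing $(\al \star Z)(u) = \sum_v \al(v) Z(u-v)$ with indices modulo $d$, a direct computation using $\E[Z(j) Z(j')] = \delta_{jj'}$ gives
\begin{equation*}
  \Sigma_\al(u,u') = \sum_{v} \al(v)\, \al(v + u'-u) = (\al \star \tilde{\al})(u' - u),
\end{equation*}
so that $\Sigma_\al$ depends only on $u' - u \bmod d$ and is therefore circulant. The same argument applied to $\alb$ shows $\Sigma = \Sigma_{\alb}$ is circulant with first row $\alb \star \tilde{\alb}$.

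Second, I would invoke the classical diagonalization of circulants by the DFT: the eigenvalues of the circulant matrix associated with a sequence $c \in \R^d$ are exactly the Fourier coefficients $\what{c}(\om)$, $\om \in \Om_d$, with eigenvectors the Fourier modes. Using $\what{\tilde{\al}}(\om) = \what{\al}(\om)^\ast$ for real $\al$ and the convolution theorem, the eigenvalues of $\Sigma_\al$ are $\what{\al}(\om)\, \what{\al}(\om)^\ast = |\what{\al}(\om)|^2$, and analogously those of $\Sigma$ are $|\what{\alb}(\om)|^2$. Since both matrices share the same Fourier eigenbasis, the difference $\Sigma_\al - \Sigma$ is also circulant and diagonalizes with eigenvalues $|\what{\al}(\om)|^2 - |\what{\alb}(\om)|^2$.

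Finally, $\Sigma_\al - \Sigma$ is real symmetric, so its spectral norm is the largest $|\lambda|$ among its eigenvalues, yielding the claimed formula. The only technical care needed is the book-keeping of circular indices and the identity $\what{\tilde{\al}} = \what{\al}^\ast$; everything else follows from the standard DFT diagonalization of circulant matrices. I do not anticipate a serious obstacle here, since the statement is essentially a restatement of the spectral factorization of stationary (circular) covariances.
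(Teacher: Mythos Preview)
Your proposal is correct and follows essentially the same approach as the paper: both arguments note that $\Sigma_\al$ is circulant, diagonalize it by the discrete Fourier transform to obtain eigenvalues $|\what{\al}(\om)|^2$, and conclude by reading off the spectral norm of the difference. Your version simply makes explicit the computation $\Sigma_\al(u,u') = (\al \star \tilde{\al})(u'-u)$ and the identity $\what{\tilde{\al}} = \what{\al}^\ast$ that the paper leaves implicit via a reference.
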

\begin{proof}
 For any $\al \in \AL$, $\Sigma_{\al} $ is a Toeplitz and circulant matrix, thus it can be diagonalized by the discrete Fourier transform. As $Z$ is a stationary Gaussian white noise, the eigenvalues of $\Sigma_{\al} $ are given by $|\what{\al}  (\om) |^2$ for $\om \in \Om_d$ \citep{Priestley}. 
	The spectral norm is thus the maximal absolute difference between the eigenvalues of $\Sigma_{\al} $ and $\Sigma=\Sigma_{\alb} $. 
 \end{proof}

In this paper, we are interested in a particular set of generators defined by
\[
	\AL_n =  \{  \al \in \R^d | | \what{\al}(\om) |^2 = \E_n (  |\what{X}(\om) |^2 )  /  \E_n (  |\what{Z}(\om) |^2 ), \forall \om \in \Om_d \} . 
\]
In fact, every sequence of generators in $\AL_n$ is consistent in the following sense. 
\begin{prop}\label{propConsistent}(Consistent generator)
	Assume $\al_n \in \AL_n$, then $\|  \Sigma_{\al_n}  - \Sigma \| \to 0 $ in probability as $n \to \infty$. 
\end{prop}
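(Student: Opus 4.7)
The plan is to reduce the statement to an application of the weak law of large numbers applied frequency-by-frequency, combined with the continuous mapping theorem to handle the ratio.

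First, I would invoke Proposition \ref{propSigmaFourier} to rewrite
\[
\|\Sigma_{\al_n} - \Sigma\| = \max_{\om \in \Om_d} \bigl|\,|\what{\al_n}(\om)|^2 - |\what{\alb}(\om)|^2\,\bigr|.
\]
Since $\Om_d$ is a finite set (of cardinality $d$), it suffices to show that for every fixed $\om \in \Om_d$, the quantity $|\what{\al_n}(\om)|^2$ converges in probability to $|\what{\alb}(\om)|^2$; the maximum of finitely many in-probability-convergent sequences then also converges in probability to the maximum of the limits.

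Next, by the definition of $\AL_n$, I would rewrite the random variable of interest as the ratio
\[
|\what{\al_n}(\om)|^2 = \frac{\E_n(|\what{X}(\om)|^2)}{\E_n(|\what{Z}(\om)|^2)}.
\]
Here $|\what{X}(\om)|^2$ and $|\what{Z}(\om)|^2$ are fixed measurable functions of the samples, and since $X$ and $Z$ are Gaussian, they possess finite second (in fact all) moments. Thus the weak law of large numbers applies to each of the numerator and denominator. To identify the limits, I would use that for circular convolution $X = \alb \star \Zb$ one has $\what{X}(\om) = \what{\alb}(\om) \what{\Zb}(\om)$, so that $\E(|\what{X}(\om)|^2) = |\what{\alb}(\om)|^2 \, \E(|\what{\Zb}(\om)|^2)$, and a direct computation using $\Zb \sim \mathcal{N}(0,I_d)$ (together with orthonormality of the Fourier exponentials) gives $\E(|\what{\Zb}(\om)|^2) = \E(|\what{Z}(\om)|^2) = d$.

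Finally, I would combine these facts via the continuous mapping theorem (or Slutsky's theorem): since the denominator converges in probability to the strictly positive constant $d$, the ratio converges in probability to $|\what{\alb}(\om)|^2 \, d / d = |\what{\alb}(\om)|^2$, as desired. The only mild technical point to address is the well-definedness of the ratio for finite $n$; this holds almost surely because $\E_n(|\what{Z}(\om)|^2) > 0$ a.s. (a sum of squared absolute values of non-degenerate Gaussian linear combinations). Beyond that the argument is routine, and I do not expect any serious obstacle: the proof essentially reduces Proposition \ref{propConsistent} to the classical WLLN applied on a finite grid of frequencies.
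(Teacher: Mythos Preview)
Your proposal is correct and follows essentially the same approach as the paper: reduce via Proposition~\ref{propSigmaFourier} to the finite frequency grid, apply the law of large numbers to the numerator and denominator of the ratio defining $|\what{\al_n}(\om)|^2$, and conclude. The only cosmetic difference is that the paper factors $|\what{\alb}(\om)|^2$ out of the numerator exactly (using $\E_n(|\what{X}(\om)|^2)=|\what{\alb}(\om)|^2\,\E_n(|\what{\Zb}(\om)|^2)$ as an identity, not merely in expectation) and then shows $\E_n(|\what{\Zb}(\om)|^2)/\E_n(|\what{Z}(\om)|^2)\to 1$, whereas you compute the two limits separately and divide; the substance is the same.
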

\begin{proof}
	The set $\AL_n$ is well defined, as almost surely $  \E_n (  |\what{Z}(\om) |^2 ) \neq 0$ for all $\om \in \Om_d$. The law of large numbers implies that for any $\om \in \Om_d$, 
	\[
	\E_n (  |\what{\Zb}(\om) |^2 ) / \E_n (  |\what{Z}(\om) |^2 )  \to 1  , \quad n \to \infty, \quad \mbox{in probability}. 
	\]
	As $\E_n (  |\what{X}(\om) |^2 )  =  \E_n (  |\what{\Zb} (\om) |^2 )   | \what{\alb}(\om) |^2 $, it implies the convergence of $| \what{\al_n}(\om) |^2 $ to $| \what{\alb}(\om) |^2 $ in probability
	for any $\om \in \Om_d$. From Proposition \ref{propSigmaFourier}, we conclude that  
	$\|  \Sigma_{\al_n}  - \Sigma \| \to 0 $ in probability. 
\end{proof}

For some technical reasons in Section \ref{sec:cne}, 
we next introduce a working assumption 
regarding the number of samples $n$, and the ground-truth generator  $g_{\alb} $. 
\begin{assum}\label{assumX}
Assume $n \geq 2$, $d $  is even, and $\alb \not \in \AL_0$, where 
\[
	\AL_0 =  \{ \al \in \AL | \exists \om  \in \Om_d, \what{\al} (\om) = 0\} .
\]
\end{assum}
The condition $\alb \not \in \AL_0$ is needed to avoid degeneracy (i.e. to avoid having zero eigenvalues in $\Sigma$). 
We are thus considering stationary processes whose power spectrum
are supported on the Fourier domain $\Om_d$.\footnote{The power spectrum of a stationary process $X$ observed over an interval of length $d$ is defined as the limit of $ \E ( | \what{X} (\om) |^2 ) /d $ as $d \to \infty$. In this paper, we use the same name for the finite $d$ case (without taking the limit).  
}

\subsection{Nash equilibrium and its consistency}

To study the solution
of moment matching GANs, 
we review the notion of Nash equilibrium in  
differentiable zero-sum games,
and then discuss its statistical consistency property. 
This property allows one to estimate the ground-truth generator $g_{\alb}$ as $n \to \infty$.

In this paper, 
we assume that 
both $\AL$ and $\BE$ belong to finite-dimensional Euclidean spaces, 
and that $V_n$ is everywhere twice-differentiable with respect to $\al$ and $\be$.

\begin{defi}(Equilibrium)
Let $\AL$ and $\BE$ be open sets. We say that $(\ald,\bed)$ is an 
equilibrium of a game $(\AL,\BE,V_n)$, or an equilibrium of $V_n$, if 
\begin{equation}\label{eq:grad0}
	\nabla_{\al} V_n  (\ald,\bed) = 0, \quad \nabla_{\be} V_n  (\ald,\bed) = 0. 
\end{equation}
\end{defi}

This notion of equilibrium is used in 
\cite{Daskalakis2019} to study fixed points of games. 
It is also called stationary or critical point \citep{Daskalakis18, jin2020local}. 

\begin{defi}(Nash equilibrium)
We say that $(\ald,\bed)$ is a Nash equilibrium of $(\AL,\BE,V_n)$, or a Nash equilibrium of $V_n$, if 
\begin{equation}\label{eq:nash}
	V_n ( \ald, \be) \leq 
	V_n ( \ald, \bed) \leq 
	V_n ( \al, \bed) , 
	\quad  \forall  \al \in \AL, \forall  \be \in \BE. 
\end{equation}
For open $\AL$ and $\BE$, we say $(\ald,\bed)$ is a 
non-Nash equilibrium when \eqref{eq:grad0} holds and \eqref{eq:nash} does not hold, 
\end{defi}

Note that when $\AL$ and $\BE$ are open sets, a Nash equilibrium $(\ald,\bed)$ of $V_n$ is also an equilibrium because \eqref{eq:nash} implies \eqref{eq:grad0}. 
As the original GANs \citep{goodfellow2014generative}, 
Nash equilibrium always exists for the game defined in \eqref{eq:V}.
The existence is less clear for $V_n$ as we have only finite samples:
we are no longer in the situation where $(\alb,\be)$ is a Nash equilibrium for any $\be \in \BE$. 
This is different to the unrealizable setting in \cite{farnia20a} 
which assumes in our context that $\alb \not \in \AL$ and studies \eqref{eq:V} rather than $V_n$. 


In finite-sample realizable GANs, 
we can introduce a notion of consistent equilibrium 
as in the classical estimation theory in statistics \citep{ibraHas1981}. 
This can be formalized in our context 
using the following definition. 

\begin{defi}(Consistent Nash equilibrium)
We say that $\{(\ald_n,\bed_n) \}_{n \geq 1}$ 
is a sequence of consistent Nash equilibrium if $\exists N \in \N$ such that 
$\forall n \geq N$, $(\ald_n,\bed_n)$ is almost surely a Nash equilibrium of $V_n$,
and it satisfies
\begin{equation}\label{eq:cst}
	\| \Sigma_{\ald_n}  - \Sigma \|  \to 0, \quad  n \to \infty
	\quad \mbox{in probability} . 
\end{equation}
When $\AL$ and $\BE$ are open sets, we say that
$ \{ (\ald_n,\bed_n) \}_{ n \geq 1}$ 
is a sequence of consistent equilibrium if $\exists N \in \N$ such that $\forall n \geq N$,
\eqref{eq:grad0} holds for $(\ald_n,\bed_n)$ and \eqref{eq:cst} holds.
It is a sequence of consistent non-Nash equilibrium 
if it is a sequence of consistent equilibrium, and 
$\exists N \in \N$ such that $\forall n \geq N$,
\eqref{eq:nash} does not hold almost surely for $(\ald_n,\bed_n)$. 
\end{defi}

\begin{remark}
In the next, we say simply 
$(\ald_n,\bed_n)$ is a consistent Nash (resp. non-Nash) equilibrium of $V_n$
without mentioning the sequence.
Note that in the above definition, we do not specify whether $\bed_n$ 
is a convergent sequence, because what matters most in our problem 
is the convergence of $g_{\ald_n}$. 
The convergence in terms of the covariance matrix in \eqref{eq:cst} may be generalized 
to measure certain distance or divergence between probability distributions. 
\end{remark}

In some situations, we are interested in 
consistent Nash equilibrium or consistent
non-Nash equilibrium. 
We next review a necessary condition of Nash equilibrium
obtained from the Jacobian matrix of a differentiable (continuous) game.

\begin{defi}(Jacobian matrix)
Let $\AL$ and $\BE$ be open sets. 
The gradient vector of $V_n$ is 
\[
  \left( \begin{array}{c}
\nabla_{\al} V_n (\al,\be)    \\
-\nabla_{\be} V_n (\al,\be) 
\end{array} \right). 
\]
The Jacobian matrix of $V_n$ is the derivative of the gradient vector, i.e. 
\[
J_n (\alpha,\beta) = \left( \begin{array}{cc}
\nabla^2_{\al\al} V_n (\al,\be)  & \nabla^2_{\al\be} V_n (\al,\be)   \\
- \nabla^2_{\be\al} V_n (\al,\be)  & - \nabla^2_{\be\be} V_n (\al,\be)  
 \end{array} \right)  . 
\]
\end{defi}

The following result is known in differentiable games \cite[Proposition 2]{Ratliff13}.
\begin{prop}\label{propNashsemi}
Let $\AL$ and $\BE$ be open sets. If $(\ald,\bed)$ is a Nash equilibrium of $V_n$, then both $\nabla^2_{\al\al} V_n (\ald,\bed) $ and $-\nabla^2_{\be\be} V_n (\ald,\bed)  $ are semi-positive definite. 
\end{prop}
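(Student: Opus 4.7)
The plan is to observe that the Nash inequalities \eqref{eq:nash} decouple the two-player structure at the equilibrium point into two standard unconstrained single-variable optimization problems, and then invoke the classical second-order necessary conditions for interior local extrema of smooth functions.

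First I would unpack the definition: the left inequality in \eqref{eq:nash} says that $\bed$ is a global maximizer of the smooth map $\be \mapsto V_n(\ald,\be)$ on the open set $\BE$, and the right inequality says that $\ald$ is a global minimizer of the smooth map $\al \mapsto V_n(\al,\bed)$ on the open set $\AL$. In particular these extrema are attained at interior points, so the ordinary (unconstrained) second-order necessary conditions apply to each marginal function separately. Note that \eqref{eq:grad0} is recovered as the first-order condition, but for the proposition I only need the second-order part.

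Next I would carry out a second-order Taylor expansion. Fix an arbitrary direction $h \in \AL$ and consider the real-valued function $\varphi(t) = V_n(\ald + t h, \bed)$ defined for $t$ in a neighborhood of $0$ (using openness of $\AL$). Then $\varphi$ is twice-differentiable with $\varphi'(0) = h^\intercal \nabla_\al V_n(\ald,\bed) = 0$ and $\varphi''(0) = h^\intercal \nabla^2_{\al\al} V_n(\ald,\bed)\, h$, giving
\[
  \varphi(t) = \varphi(0) + \tfrac{t^2}{2}\, h^\intercal \nabla^2_{\al\al} V_n(\ald,\bed)\, h + o(t^2).
\]
Since $\ald$ minimizes $V_n(\cdot,\bed)$, we have $\varphi(t) \geq \varphi(0)$ for all small $t$, so dividing by $t^2$ and letting $t \to 0$ forces $h^\intercal \nabla^2_{\al\al} V_n(\ald,\bed)\, h \geq 0$. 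As $h$ was arbitrary, $\nabla^2_{\al\al} V_n(\ald,\bed)$ is positive semi-definite. The symmetric argument with $\psi(t) = V_n(\ald, \bed + t k)$, using that $\bed$ is a maximizer, yields $k^\intercal \nabla^2_{\be\be} V_n(\ald,\bed)\, k \leq 0$ for every $k$, i.e. $-\nabla^2_{\be\be} V_n(\ald,\bed)$ is positive semi-definite.

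There is essentially no obstacle here: the whole content is that the Nash condition is strictly stronger than mere equilibrium, so one can exploit the one-sided inequalities locally, and the openness hypothesis on $\AL$ and $\BE$ rules out boundary effects that would otherwise spoil the second-order conclusion. The only mild care needed is to ensure that the restrictions along lines stay inside $\AL$ and $\BE$ for small $t$, which is immediate from openness.
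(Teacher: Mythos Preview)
Your argument is correct and is exactly the standard textbook derivation of second-order necessary conditions at an interior Nash equilibrium. The paper does not actually supply a proof of this proposition; it simply cites \cite{Ratliff13} for this known fact, so there is nothing further to compare. One small notational slip: you write ``fix an arbitrary direction $h \in \AL$'', but $h$ should be a vector in the ambient Euclidean space containing $\AL$, not necessarily an element of $\AL$ itself; the openness of $\AL$ then guarantees $\ald + th \in \AL$ for small $t$, which is all you need.
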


\begin{remark}\label{rmq:nonNash}
It follows that if $ \nabla^2_{\be\be} V_n (\ald,\bed) $ has at least one strictly positive eigenvalue, then the equilibrium $(\ald,\bed)$ is a non-Nash equilibrium. 
\end{remark}

\section{Existence of consistent Nash equilibrium}
\label{sec:cne}

In this section,
we study the impact of the discriminator family $\{ f_\be, \be \in \BE \}$ 
on the existence of consistent Nash equilibrium of $V_n$.
As we consider only Gaussian stationary processes, 
all the discriminators are constructed from second-order statistical moments.
For non-Gaussian distributions such as those 
generated by a one-layer neural network, 
second-order moments are also 
used to construct the discriminator family \citep[Section 5]{lei2020sgd}.
However, the existence of Nash equilibrium has not been studied in these GANs. 

\subsection{Real discriminator}

Consider
\[
	f_\be (X) = |\lb \be, X \rb|^2 , \quad \BE = \{ \be \in \R^d | \| \be \| \leq 1 \} .
\]

This discriminator has only one feature ($m=1$), and it is called a real discriminator 
because $\be$ is a real-valued vector. The next result shows that it can completely capture the spectral properties of $X$ due to the maximization of $V_n$ with respect to $\be $. 

\begin{prop}\label{prop1}
	Let $\Sigma_n = \E_n( X X^\intercal )$, $\Sigma_{\al,n} = \E_n( g_\al (Z) g_\al (Z)^\intercal ) $, 
	then for any $\al \in \AL$, 
	\[
		\sup_{\be \in \BE } V_n ( \al , \be ) = \| \Sigma_n - \Sigma_{\al,n } \|^2   .
	\]
	Moreover, if $\Sigma_n \neq \Sigma_{\al,n}$, 
	then the optimal $\be$ is a unit-norm 
	eigenvector of $\Sigma_n - \Sigma_{\al,n }$ 
	which has the largest absolute eigenvalue.
\end{prop}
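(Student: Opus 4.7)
The plan is to reduce the inner maximization to a Rayleigh--Ritz problem for the symmetric matrix $M := \Sigma_n - \Sigma_{\al,n}$. First, expanding $f_\be(X) = |\lb \be, X\rb|^2 = \be^\intercal X X^\intercal \be$ and using linearity of the empirical expectation yields $\E_n(f_\be(X)) = \be^\intercal \Sigma_n \be$, and similarly $\E_n(f_\be(g_\al(Z))) = \be^\intercal \Sigma_{\al,n} \be$. Since the discriminator is scalar-valued ($m=1$), the Euclidean norm in \eqref{eq:Vn} is the absolute value, so
\[
V_n(\al,\be) = \bigl(\be^\intercal M \be\bigr)^2 .
\]

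Next I would maximize $(\be^\intercal M \be)^2$ over the closed unit ball $\BE$. Because $M$ is symmetric, the Rayleigh--Ritz theorem gives $\sup_{\|\be\|=1} \be^\intercal M \be = \lambda_{\max}(M)$ and $\inf_{\|\be\|=1} \be^\intercal M \be = \lambda_{\min}(M)$, so the squared value is maximized by whichever of the two has the larger modulus, and this maximum equals $\|M\|^2$, the spectral norm squared. Moreover, rescaling $\be$ by $t \in [0,1]$ multiplies the objective by $t^4$, so whenever the supremum is positive it is attained only on the unit sphere; this disposes of the inequality constraint $\|\be\|\leq 1$.

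Finally, when $\Sigma_n \neq \Sigma_{\al,n}$ we have $M \neq 0$, hence the supremum is strictly positive, and the equality case of Rayleigh--Ritz forces any maximizer to be a unit-norm eigenvector of $M$ associated with the eigenvalue of largest absolute value. I do not anticipate any serious obstacle: the whole argument is essentially a one-line application of the variational characterization of the spectral norm of a symmetric matrix, once one recognizes that $V_n(\al, \cdot)$ is a squared Rayleigh quotient of $M$.
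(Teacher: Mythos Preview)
Your argument is correct, but it proceeds differently from the paper. The paper attacks the constrained maximization via the KKT conditions: it writes the Lagrangian $L(\be,\la)=-(\be^\intercal A\be)^2-\la(1-\|\be\|^2)$ with $A=\Sigma_n-\Sigma_{\al,n}$, sets $\nabla_\be L=0$, and splits into the cases $\lad>0$ (forcing $\|\bed\|=1$ and $\bed$ an eigenvector of $A$, with value $\mu^2$) and $\lad=0$ (giving value $0$). You instead reduce directly to Rayleigh--Ritz on the sphere and handle the ball constraint by the homogeneity observation that the objective scales like $t^4$. Your route is shorter and more elementary, needing no optimality conditions for inequality-constrained problems; the paper's KKT approach is more mechanical but makes the structure of the stationary points (including the non-optimal ones with $\lad=0$) explicit, which connects more transparently to the equilibrium discussion that follows in the paper.
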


The proof is given in Appendix \ref{proofkkt}. 
We next show that in general, 
there is no generator which can achieve a zero error of $ \| \Sigma_n - \Sigma_{\al,n } \|^2  $, 
i.e. the empirical covariance of $X$ and $g_\al(Z)$ can not be perfectly matched.

\begin{lemm}\label{lem:nonzero}
Under Assumption \ref{assumX}, $\forall \al \in \AL$, we have almost surely
$ \| \Sigma_n - \Sigma_{\al,n} \| 	 > 0 $. 
\end{lemm}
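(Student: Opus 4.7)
The plan is to translate the equation $\Sigma_n = \Sigma_{\al,n}$ into the Fourier basis, exploit the non-vanishing of $\what{\alb}$ to strip it of its dependence on $\al$, and then dispatch the resulting constraint by an independence/continuity argument.

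First, I would rewrite $\Sigma_{\al,n} = T_\al S_Z T_\al^{\intercal}$, where $T_\al$ is the circulant matrix of convolution by $\al$ and $S_Z = \frac{1}{n}\sum_{i=1}^n Z_i Z_i^{\intercal}$, with an analogous expression for $\Sigma_n$ in terms of $T_{\alb}$ and $S_{\Zb}$. Because $T_\al$ is diagonalized by the DFT with eigenvalues $\what{\al}(\om)$, conjugating by the DFT gives $D_\al M_Z D_\al^{\ast}$ with $D_\al = \mathrm{diag}(\what\al(\om))_{\om \in \Om_d}$ and a Hermitian matrix $M_Z$ of entries $M_Z(\om_1,\om_2) = \frac{1}{n}\sum_i \what{Z_i}(\om_1) \what{Z_i}(\om_2)^{\ast}$; similarly $M_{\Zb}$ appears in the Fourier diagonalization of $\Sigma_n$. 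Assumption~\ref{assumX} makes $D_{\alb}$ invertible, so $\Sigma_n = \Sigma_{\al,n}$ is equivalent to $M_{\Zb} = E M_Z E^{\ast}$ with $E := D_\al D_{\alb}^{-1}$ diagonal.

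Next, I would restrict to the two distinct frequencies $\om_1 = 0$ and $\om_2 = \pi$, which both belong to $\Om_d$ because $d$ is even. The two diagonal entries of $M_{\Zb} = EM_Z E^{\ast}$ determine $|E(\om_i)|^2 = M_{\Zb}(\om_i,\om_i)/M_Z(\om_i,\om_i)$, and taking the modulus of the $(\om_1,\om_2)$ entry eliminates the phase of $E$ and collapses the system to the purely data-dependent identity
\[
 \frac{|M_Z(\om_1,\om_2)|^2}{M_Z(\om_1,\om_1)\, M_Z(\om_2,\om_2)} \;=\; \frac{|M_{\Zb}(\om_1,\om_2)|^2}{M_{\Zb}(\om_1,\om_1)\, M_{\Zb}(\om_2,\om_2)} .
\]
Since this identity is free of $\al$, its failure excludes $\Sigma_n = \Sigma_{\al,n}$ for \emph{every} $\al \in \AL$ simultaneously.

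Finally, I would argue that the identity fails almost surely. Its left-hand side is the squared empirical coherence between the two real Gaussian coordinates $\what{Z_i}(0)$ and $\what{Z_i}(\pi)$, which are independent (orthogonal frequencies, and $\sum_u (-1)^u = 0$ for $d$ even), so for $n \geq 2$ this is a non-constant real-analytic function of an absolutely continuous Gaussian sample and has an absolutely continuous law on $[0,1)$ (a scaled Beta distribution, in fact). The right-hand side has the same law and is independent of the left by $Z \perp \Zb$, hence Fubini gives probability zero for their coincidence. I expect the main obstacle to be precisely this non-atom claim: it is only valid thanks to $n \geq 2$, since at $n = 1$ the rank-one structure of $Z_1 Z_1^{\intercal}$ forces both squared coherences to equal $1$ identically and the obstruction evaporates. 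This is exactly why Assumption~\ref{assumX} imposes $n \geq 2$, and why this step is the true content of the lemma.
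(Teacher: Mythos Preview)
Your proposal is correct and follows essentially the same route as the paper: both arguments pass to the Fourier side, restrict to the two real frequencies $\om=0$ and $\om=\pi$ (available because $d$ is even), eliminate the unknown $\al$ by forming the ratio $|M(\om_1,\om_2)|^2 / (M(\om_1,\om_1)M(\om_2,\om_2))$, and then show this squared empirical correlation of two independent Gaussian $n$-samples is non-atomic for $n\geq 2$ so that the $Z$- and $\Zb$-sides coincide with probability zero. Your matrix-conjugation framing is a tidy repackaging of the paper's coordinate computation, and the paper's only additional content is an explicit density calculation (yielding $F'(\theta)\propto\cos^{n-2}\theta$) where you invoke the Beta law; both are valid.
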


The proof is given in Appendix \ref{proofnonzero}. 
It is due to the fact that $n$ is finite, and the samples of $X=g_{\alb}(\Zb)$ are generated from 
$\Zb$, which are independent of the samples from $Z$. 
Based on this result,
we next show that there is no Nash equilibrium in $V_n$.

\paragraph{Non-existence of Nash equilibrium}

Assume that $(\ald,\bed)$ 
is a Nash equilibrium of $V_n$, 
then it is a best response solution
for each player, i.e. 
\begin{align}
\ald & \in  \arg \min_{ \al \in \AL }  V_n ( \al,\bed )  \label{eq1} \\
\bed & \in \arg \max_{ \be \in \BE }  V_n ( \ald,\be )  \label{eq2}
\end{align}

The next result shows that such solution does not exist in general. It implies that no consistent Nash equilibrium exists in $V_n$. 
\begin{theorem}\label{ref:realNonNE}
	Under Assumption \ref{assumX}, there is almost surely no Nash equilibrium in $V_n$. 
\end{theorem}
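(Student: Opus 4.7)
The plan is to argue by contradiction: suppose $(\ald,\bed)$ is a Nash equilibrium of $V_n$, and derive an impossibility on the value $V_n(\ald,\bed)$. The best-response condition \eqref{eq2} combined with Proposition \ref{prop1} and Lemma \ref{lem:nonzero} yields $V_n(\ald,\bed) = \sup_{\be \in \BE} V_n(\ald,\be) = \|\Sigma_n - \Sigma_{\ald,n}\|^2 > 0$ almost surely; in particular $\bed \neq 0$. The strategy is then to contradict this strict positivity via the best-response condition \eqref{eq1}: I will show that $\min_{\al \in \AL} V_n(\al,\bed) = 0$ almost surely.

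Since $V_n(\al,\bed) = (\bed^\intercal \Sigma_n \bed - \bed^\intercal \Sigma_{\al,n}\bed)^2$ and $c := \bed^\intercal \Sigma_n \bed \geq 0$ is a fixed scalar, it suffices to exhibit some $\al \in \AL$ with $\bed^\intercal \Sigma_{\al,n}\bed = c$. Using $g_\al(Z_i) = \al \star Z_i$ and exchanging the convolution variables, one rewrites $\lb \bed,\al\star Z_i\rb = \lb \al, h_i\rb$ for an explicit vector $h_i \in \R^d$ built from $\bed$ and $Z_i$, which gives
\[
\bed^\intercal \Sigma_{\al,n}\bed = \al^\intercal M \al, \qquad M = \tfrac{1}{n}\sum_{i=1}^n h_i h_i^\intercal,
\]
a non-negative quadratic form in $\al$. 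If $M \neq 0$, its image on $\R^d$ is the half-line $[0,+\infty)$, which contains $c$, so the desired $\al$ exists.

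The key technical step, and the main obstacle, is to show $M \neq 0$ almost surely. In the Fourier domain $\what{h_i}(\om)$ is proportional to $\what{\bed}(\om)^\ast \what{Z_i}(\om)$; since $\bed \neq 0$ there exists $\om_0 \in \Om_d$ with $\what{\bed}(\om_0) \neq 0$. With $Z_i \sim \mathcal{N}(0,I_d)$, the law of $\what{Z_i}(\om_0)$ is a non-degenerate Gaussian (real-valued for the modes $\om_0 \in \{0,\pi\}$ when $d$ is even, and complex with independent non-trivial real and imaginary parts otherwise), so $\what{Z_i}(\om_0) \neq 0$ almost surely; a finite intersection over $i$ preserves this property and yields $M \neq 0$. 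Combining the two steps, $V_n(\ald,\bed) = \min_\al V_n(\al,\bed) = 0$ almost surely, contradicting the strict positivity above, so no Nash equilibrium of $V_n$ exists almost surely.
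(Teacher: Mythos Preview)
Your proof is correct and follows essentially the same route as the paper: both write $\bed^\intercal\Sigma_{\al,n}\bed$ as a quadratic form $\al^\intercal S_{\bed}\,\al$ (your $M$ is the paper's $S_{\bed}$), show this matrix is nonzero via a Fourier argument on $\bed$ and the $Z_i$, and then scale along a positive eigendirection to hit the target value $c=\bed^\intercal\Sigma_n\bed$, forcing $\min_\al V_n(\al,\bed)=0$ in contradiction with $V_n(\ald,\bed)=\|\Sigma_n-\Sigma_{\ald,n}\|^2>0$.

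One small point of care: because $\bed$ is part of the hypothetical Nash equilibrium, it depends on the samples $(Z_i)$, and so does your frequency $\om_0$; you therefore cannot literally appeal to ``the law of $\what{Z_i}(\om_0)$'' as if $\om_0$ were fixed. The paper sidesteps this by first passing to the almost-sure event $\{\what{Z_i}(\om)\neq 0\ \text{for all}\ \om\in\Om_d,\ \text{all}\ i\}$ (a finite intersection of probability-one events), on which $h_i\neq 0$ holds for \emph{every} nonzero $\bed$, regardless of which $\om_0$ it selects. With that adjustment your argument is complete and matches the paper's.
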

\begin{proof}
	From \eqref{eq2} and Proposition \ref{prop1}, it follows that $  V_n ( \ald,\bed )  = \| \Sigma_n - \Sigma_{\ald,n} \|^2$. We next show that \eqref{eq1} does not hold, i.e. 
	there exists $\al \in \AL$ such that 
	\[
	V_n ( \al, \bed) = \lb \bed , ( \Sigma_n - \Sigma_{\al,n}) \bed \rb < V_n ( \ald,\bed ) . 
	\]
		
	We minimize $V_n (\al,\bed)$ with respect to $\al$. Note that 
	\begin{align}\label{eq:vn1}
		V_n (\al,\bed) &= (  (\bed)^\intercal ( \Sigma_n  - \Sigma_{\al,n} )  \bed )^2  \nonumber \\
			&= (  (\bed)^\intercal \Sigma_n  \bed )^2 + 
				(  (\bed)^\intercal \Sigma_{\al,n}  \bed )^2 - 
				2 (  (\bed)^\intercal \Sigma_n  \bed ) 	(  (\bed)^\intercal \Sigma_{\al,n}  \bed ) . 
	\end{align}	
	Assume $\{ z_i \}$ are the $n$ i.i.d. samples to compute the empirical expectation of $Z$ in $V_n$. 
	We rewrite $	V_n (\al,\bed)$ more explicitly in terms of $\al$, in the following equation
	\begin{equation}\label{eq:vn2}
		(\bed)^\intercal \Sigma_{\al,n}  \bed =  \E_n  ( |  \lb \bed,  \al \star Z \rb |^2 )  =  \al^\intercal S_{\bed} \al  , 
	\end{equation}
	where $S_{\bed}(v,v') = \frac{1}{n} \sum_{i=1}^n \bed \star \tilde{z}_i (v)  \bed \star \tilde{z}_i (v')  $
	for $(v,v') \in \{ 0,\cdots, d-1 \}^2$. 
	
	It follows from \eqref{eq:vn2} and \eqref{eq:vn1} that 
	\begin{equation}
		\nabla_{\al} V_n (\al,\bed) = 4	(  (\bed)^\intercal \Sigma_{\al,n}  \bed )  S_{\bed} \al   - 4 (  (\bed)^\intercal \Sigma_n  \bed )   S_{\bed} \al . 		
	\end{equation}
	
	Setting $\nabla_{\al} V_n (\al,\bed) = 0$ implies two situations
	\begin{itemize}
		\item $ S_{\bed} \al = 0$: from \eqref{eq:vn1} and \eqref{eq:vn2}, 
		it follows that $V_n (\al,\bed) =  (  (\bed)^\intercal \Sigma_n  \bed )^2 $. 
		\item  $ (\bed)^\intercal \Sigma_{\al,n}  \bed = (\bed)^\intercal \Sigma_n  \bed  $: from \eqref{eq:vn1}, it follows that $V_n (\al,\bed) = 0 $.
	\end{itemize}
	The second situation implies that the minimum of $V_n(\al,\bed) $ is zero, as long as one can find a solution $\al$ such that 
	\begin{equation}\label{eq:grad2}
		 (\bed)^\intercal \Sigma_{\al,n}  \bed = (\bed)^\intercal \Sigma_n  \bed  . 
	\end{equation}
	We next show that almost surely, this is possible, i.e. $\min_{ \al \in \AL } V_n (\al, \bed) = 0$. 
	However, this contradicts to Lemma \ref{lem:nonzero}, which implies that almost surely $\Sigma_n - \Sigma_{\ald,n} \neq 0$, i.e. $V_n( \ald,\bed) = \| \Sigma_n - \Sigma_{\ald,n} \|^2 > 0$. Thus $(\ald,\bed)$ can not be a Nash equilibrium. 
	
	From \eqref{eq:vn2} and \eqref{eq:grad2}, 
	it remains to find $\al \in \AL$ such that 
	\begin{equation}\label{eq:aleq}
		\al^\intercal S_{\bed} \al   = (\bed)^\intercal \Sigma_n  \bed  . 
	\end{equation}
	
	From Proposition \ref{prop1}, we know that $\bed$ is a unit-norm eigenvector of 
	$ \Sigma_n - \Sigma_{\ald,n}$. As a consequence, almost surely
	\begin{equation}\label{eq:nonzero2}
		\forall i \leq n, \quad \bed \star \tilde{z}_i \neq 0 . 
	\end{equation}
	This is because $\tilde{z}_i $ is sampled i.i.d. from $\mathcal{N}(0,I_d)$, thus almost surely $\what{\tilde{z}}_i (\om) \neq 0 $ for all $\om \in \Om_d$. As $\what{\bed} \neq 0$, it follows that $ \what{\bed} \what{ \tilde{z}_i } \neq 0 $. This proves our claim of \eqref{eq:nonzero2}. 
	
	By definition, $ S_{\bed}$ is a semi-definite positive matrix. It results from \eqref{eq:nonzero2} that $ S_{\bed}$ has at least one strictly positive eigenvalue (otherwise $ S_{\bed}=0$). Let $h$ be one of the eigenvectors of $ S_{\bed}$ 
	whose eigenvalue $\la > 0$. Assume $\| h \|=1$. To construct a minimal solution of $V_n(\al,\bed)$, it suffices to take 
	\[
		\al = c h ,\quad 
		c = \sqrt{  \frac{ (\bed)^\intercal \Sigma_n \bed }  {   \la } }  .
	\]
\end{proof}

\begin{remark}\label{rmq:RealDisc}
This result suggests that although $\bed$ is 
an optimal discriminator for the generator $g_{\ald}$, 
it is not able to stabilize the minimization process of $\al$
because $ \| \Sigma_n - \Sigma_{\al,n } \|^2  $ is non-zero. 
This phenomenon is apparently due to the finite sample size $n$, but
it is also related to the fact that the generator family 
is restricted to be stationary. 
A larger generator family which can achieve a zero generator error 
may remedy the issue of the non-existence of Nash equilibrium. 
However, such model is necessarily non-stationary 
since $\Sigma_n$ is not a Toeplitz matrix. 
\end{remark}

Although there is no Nash equilibrium in $V_n$, 
one can still solve the min-max problem \eqref{eq:Vn} from a sequential game point of view.
For each $\al$, it first solves the maximization problem of $\be$, 
and then modifies $\al$ in order to minimize $ \max_{\be} V_n (\al,\be)$. 
There is a recent trend in the literature 
to extend the notion of Nash equilibrium towards 
the notion of Stackelberg equilibrium for sequential games, 
see e.g. \cite{jin2020local, fiez2020gradient, farnia20a}. 
However, the remaining optimization problem is challenging in practice,
in terms of the landscape of solution sets \citep{SunNeurips2020}, 
as well as the algorithm design and convergence analysis \citep{Wang2020ICLR}. 
In this paper, we take the simultaneous game point of view 
where the notion of Nash equilibrium is fundamental \citep{game2019}. 
We next study the existence of consistent Nash equilibrium in $V_n$
by looking for translational invariant features in the discriminator.

\subsection{Complex discriminator}

Consider 
\[
f_\be (X) = ( |\lb \be_\ell , X \rb|^2 )_{0 \leq \ell < m} , 
\quad \BE = \{   (\be_\ell )_{ 0 \leq \ell < m} |  \be_\ell \in \C^d  \} . 
\]

According to the proof of Proposition \ref{propSigmaFourier}, 
the discrete Fourier basis diagonalizes the 
covariance matrice $\Sigma_\al$ of $g_\al(Z)$ for any $\al \in \AL$.
This family of discriminator is constructed 
to contain this basis.
We show that when using $m=d$ features, 
consistent non-Nash equilibrium exists, 
and it is getting closer to a consistent Nash equilibrium as $n \to \infty$.

\paragraph{Existence of consistent non-Nash equilibrium}
Proposition \ref{propConsistent} shows that $ \AL_n$ is a set of consistent generators. 
We construct a family of consistent equilibrium $(\ald,\bed)$ where $\ald \in \AL_n$ and $\bed \in \BE$.
Note that in the definition of $\BE$, 
we do not impose any norm constraint on each $\be_\ell$. 
This simplifies our analysis of the equilibrium.

\begin{theorem}\label{thm:FouriernonNash}
	Assume $m = d$, $\bed_\ell (u) = e^{i 2 \pi \ell u / d } $ for $ 0 \leq u  < d$ and $0 \leq \ell < d$, then
	$\forall \ald \in \AL_n$, we have that $(\ald,\bed)$ is a consistent equilibrium of $V_n$ such that
	\[
		V_n ( \ald, \bed ) = 0 .
	\]
	Moreover, under Assumption \ref{assumX}, 
	$( \ald, \bed )$ is a consistent non-Nash equilibrium. 
\end{theorem}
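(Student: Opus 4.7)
The guiding observation is that with $\bed_\ell(u) = e^{i 2 \pi \ell u / d}$ the inner product defined in the notations satisfies $\lb \bed_\ell , Y \rb = \what{Y}(\om_\ell)$ for $\om_\ell = 2\pi\ell/d \in \Om_d$, so the chosen discriminator reads off squared Fourier moduli: $f_\bed(Y) = (|\what{Y}(\om_\ell)|^2)_{0 \leq \ell < d}$. Since $\what{g_\al(Z)}(\om) = \what{\al}(\om) \what{Z}(\om)$, the very definition of $\AL_n$ gives $\E_n |\what{g_\ald(Z)}(\om_\ell)|^2 = |\what{\ald}(\om_\ell)|^2 \E_n |\what{Z}(\om_\ell)|^2 = \E_n |\what{X}(\om_\ell)|^2$ for every $\ell$, hence $V_n(\ald,\bed) = 0$. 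Because $V_n \geq 0$ on $\AL \times \BE$ with equality at $(\ald,\bed)$, this point is a global minimum of $V_n$ in both variables simultaneously, so $\nabla_\al V_n(\ald,\bed) = 0$ and $\nabla_\be V_n(\ald,\bed) = 0$. Combined with Proposition \ref{propConsistent}, which guarantees $\| \Sigma_{\ald_n} - \Sigma \| \to 0$ in probability whenever $\ald_n \in \AL_n$, this already yields the consistent equilibrium assertion.

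For the non-Nash property I plan to invoke Remark \ref{rmq:nonNash} by exhibiting a strictly positive eigenvalue of $\nabla^2_{\be\be} V_n(\ald,\bed)$. Setting $A = \Sigma_n - \Sigma_{\ald,n}$ and $h_\ell(\be_\ell) = \be_\ell^\ast A \be_\ell = \E_n |\lb \be_\ell, X \rb|^2 - \E_n |\lb \be_\ell, g_\ald(Z) \rb|^2$, the cost decouples as $V_n(\ald,\be) = \sum_\ell h_\ell(\be_\ell)^2$, so the $\be$-Hessian is block diagonal across $\ell$. Writing each $\be_\ell = a_\ell + i b_\ell \in \R^d \oplus \R^d$ and using that $A$ is real symmetric, one checks $h_\ell = a_\ell^\intercal A a_\ell + b_\ell^\intercal A b_\ell$, whose real-coordinate gradient is $\nabla h_\ell = (2 A a_\ell,\, 2 A b_\ell)$ and vanishes at $\bed_\ell$ iff $A \bed_\ell = 0$. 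Since $h_\ell(\bed_\ell) = 0$, the $\ell$-th diagonal block reduces to the rank-one outer product $2 \nabla h_\ell(\bed_\ell)\, \nabla h_\ell(\bed_\ell)^\intercal$, which is positive semi-definite with a strictly positive eigenvalue exactly when $A \bed_\ell \neq 0$.

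To finish, I only need one index $\ell$ with $A \bed_\ell \neq 0$. The family $\{\bed_\ell\}_{0 \leq \ell < d}$ is the discrete Fourier basis and therefore spans $\C^d$; if $A \bed_\ell$ were zero for every $\ell$, the linear map $A$ would vanish, contradicting Lemma \ref{lem:nonzero}, which states $\Sigma_n - \Sigma_{\ald,n} \neq 0$ almost surely under Assumption \ref{assumX}. Hence some diagonal block of $\nabla^2_{\be\be} V_n(\ald,\bed)$ has a strictly positive eigenvalue, so does the full Hessian, and Remark \ref{rmq:nonNash} delivers the non-Nash conclusion. The main subtlety is the correct real-coordinate computation of the $\be$-Hessian for a complex parameter: one must verify that the imaginary cross-terms $a_\ell^\intercal A b_\ell - b_\ell^\intercal A a_\ell$ cancel thanks to $A = A^\intercal$, after which the block decomposition and the spanning argument become routine.
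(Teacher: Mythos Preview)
Your proposal is correct and follows essentially the same route as the paper: identify $\lb \bed_\ell,\cdot\rb$ with the discrete Fourier transform so that the definition of $\AL_n$ forces $r_{n,\ell}(\ald,\bed)=0$ for every $\ell$, deduce the equilibrium property, and then show the $\be$-Hessian at $(\ald,\bed)$ is block diagonal with rank-one blocks $2\,\nabla_{\be_\ell} r_{n,\ell}\,(\nabla_{\be_\ell} r_{n,\ell})^\intercal$, at least one of which is nonzero by Lemma~\ref{lem:nonzero} and the fact that $\{\bed_\ell\}$ spans $\C^d$. The only cosmetic difference is that you obtain $\nabla V_n(\ald,\bed)=0$ from the global-minimum observation $V_n\geq 0 = V_n(\ald,\bed)$, whereas the paper reaches it via the chain-rule identity $\nabla V_n = 2\sum_\ell r_{n,\ell}\,\nabla r_{n,\ell}$; both are immediate.
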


\begin{proof}
We write $V_n (\al,\be) = \sum_{\ell  < m} r_{n,\ell}^2 (\al,\be)$, where
\[
	r_{n,\ell}  (\al,\be) = \E_n ( |\lb \be_\ell , X \rb|^2   ) - \E_n ( |\lb \be_\ell , g_\al(Z) \rb|^2   ) 
\]

To show that  $( \ald, \bed )$  is an equilibrium of $V_n$, we evaluate the gradient vector of $V_n$, which is computed with
\begin{equation}\label{eq:gradsVr}
\nabla_\al  V_n  = 2 \sum_\ell r_{n,\ell}  \nabla_\al  r_{n,\ell}   , \quad
\nabla_\be  V_n  = 2 \sum_\ell r_{n,\ell}  \nabla_\be  r_{n,\ell} .
\end{equation}

We verify that $\forall \ald \in \AL_n$ and $\forall \ell < d$, $r_{n,\ell} (\ald,\bed) = 0$. 
Indeed, by the definition of $\bed$, $\lb \bed_\ell , X \rb$ is the Fourier transform 
of $X$ at the frequency $\om = 2 \pi \ell / d \in \Om_d$. Thus $\ald \in \AL_n$ implies that 
\begin{equation}\label{eq:perfectmatching}
	\forall \om \in \Om_d, \quad 
	\E_n | \what{X} (\om) |^2 = \E_n | \what{g_{\ald} (Z)} (\om) |^2 = | \what{\ald} (\om)|^2 \E_n ( | \what{Z} (\om) |^2 )
\end{equation}
This is equivalent to $r_{n,\ell} (\ald,\bed) =0$ for all $\ell $. From \eqref{eq:gradsVr},
\eqref{eq:perfectmatching} and Proposition \ref{propConsistent}, 
$( \ald, \bed )$  is a consistent equilibrium of $V_n$. 

To show that it is not a Nash equilibrium, it is sufficient to verify that the Jabocian matrix at this equilibrium has a non-positive definite symmetric part, according to Remark \ref{rmq:nonNash}. Indeed, From \eqref{eq:gradsVr} and \eqref{eq:perfectmatching}, the symmetric part of $	J_n (\ald,\bed)$ is
\[
	\left ( 
\begin{array}{cc} 		
2 \sum_\ell (\nabla_\al r_{n,\ell} ) ( \nabla_\al  r_{n,\ell} )^\intercal  & 0  \\ 
 0  & 
- 2 \sum_\ell (\nabla_\be r_{n,\ell} ) ( \nabla_\be  r_{n,\ell} )^\intercal 	
\end{array} 
\right ) . 
\]

We next check that, under Assumption \ref{assumX}, 
there exists almost surely at least one $\ell \in \{0 ,\cdots , d-1 \}$ such that $\nabla_\be r_{n,\ell}  \neq 0$. 
Let $\be_\ell= \be_\ell^{re} + i \be_\ell^{im}$, 
we verify that at the equilibrium $ (\ald,\bed)$, 
\begin{equation}\label{eq:gradsReim}
	\nabla_{ \be_{\ell}^{re}}  r_{n,\ell}    =  
	2  ( \Sigma_n - \Sigma_{\ald,n })  (\bed_\ell )^{re}  ,\quad 
	\nabla_{ \be_{\ell}^{im}}  r_{n,\ell}    =  
	2  ( \Sigma_n - \Sigma_{\ald,n })  (\bed_\ell )^{im} 
\end{equation}
For $\ell' \neq \ell$, by the definition of $r_{n,\ell}$ we have 
\[
	\nabla_{ \be_{\ell'}^{re}}  r_{n,\ell}    =  0  ,\quad 
	\nabla_{ \be_{\ell'}^{im}}  r_{n,\ell}    =  0. 
\]

As $\nabla_\be r_{n,\ell} $ is a vector which concatenates
$ \nabla_{\be_{\ell'}} r_{n,\ell}  = ( \nabla_{ \be_{\ell'}^{re}}  r_{n,\ell}  , \nabla_{ \be_{\ell'}^{im}}  r_{n,\ell}  ) $
for $\ell' < d$, 
we conclude that $ \sum_\ell (\nabla_\be r_{n,\ell} ) ( \nabla_\be  r_{n,\ell} )^\intercal $ 
is a block diagonal matrix with $d$ blocks. 
The $\ell$-th block equals to a rank-one matrix
\[
	(\nabla_{\be_{\ell}} r_{n,\ell} )  ( \nabla_{\be_{\ell}} r_{n,\ell} )^\intercal . 
\] 

We next claim that at least for one $\ell$, $ \nabla_{\be_{\ell}} r_{n,\ell}$ is non-zero. 
Otherwise, setting the gradients in \eqref{eq:gradsReim} to zero for all $\ell$ implies that 
\[
	\Sigma_n - \Sigma_{ n,\ald } = 0 , 
\]
since $(\bed_\ell)_{\ell < d}$ forms an orthogonal basis in $\C^d$. However, this is contradictory to Lemma \ref{lem:nonzero} that, under Assumption \ref{assumX}, we have almost surely $\| \Sigma_n - \Sigma_{ n,\ald } \|  > 0 $. We conclude that the symmetric part of $	J_n(\ald,\bed)$ is not semi-positive definite,
and $( \ald, \bed )$ is a consistent non-Nash equilibrium. 
	
\end{proof}

\begin{remark}\label{rmq:uniform}
This result shows that $g_{\ald}(Z)$ matches empirically the power spectrum of $X$
because of the perfect moment matching: $V_n (\ald,\bed)= 0$. However,
the fact that $ \Sigma_n \neq \Sigma_{\ald ,n}$ still
does not allow one to obtain a Nash equilibrium
due to the loss of the semi-positive definiteness in the Jacobian matrix of $V_n$. 
Nevertheless, one can show that (a proof is given in Appendix \ref{app:uniform}), 
\begin{equation}\label{eq:asympSigma}
\sup_{  \ald \in \AL_n  }  \| 	\Sigma_n - \Sigma_{ \ald, n }    \|  \to  0 , \quad n \to \infty, \quad \mbox{in probability} . 
\end{equation}
From \eqref{eq:gradsReim}, \eqref{eq:asympSigma} we deduce that when $n$ is large, 
the symmetric part of the Jacobian matrix of $V_n$ at the equilibrium $(\ald,\bed)$
has vanishing negative eigenvalues. 
Therefore, the non-Nash equilibrium becomes closer to a Nash equilibrium as $n$ grows. 
\end{remark}

\subsection{Convolutional discriminator}

Consider 
\[
f_\be (X) = ( \|  X \star \be_\ell  \|^2 )_{0 \leq < \leq m} , 
\quad \BE = \{ (\be_\ell)_{ 0  \leq \ell < m }  | \be_\ell \in \R^d \} . 
\]
Unlike the real and complex discriminator, 
this discriminator family uses features that are always 
invariant to the translations
of $X$ on the grid of $u$.\footnote{The translations are defined with periodic boundary conditions.} 
Under appropriate assumptions, 
we show that $V_n$ admits infinite many consistent Nash equilibria, 
and in some sense they are unique.

\paragraph{Existence of consistent Nash equilibrium}
We write $V_n (\al,\be) = \sum_{\ell  < m} r_{n,\ell}^2  (\al,\be)$, where
\[
	r_{n,\ell} (\al,\be) = \E_n ( \| X \star \be_\ell \|^2 ) -  \E_n ( \| g_\al(Z) \star \be_\ell \|^2 )   . 
\]

\begin{prop}\label{propExistcne}
	Let $\ald \in \AL_n$, then 
	\[
		V_n ( \ald, \be ) = 0, \quad \forall \be \in \BE . 
	\]
	Moreover, $\forall \bed \in \BE$, $( \ald, \bed )$ is a consistent Nash equilibrium. 
\end{prop}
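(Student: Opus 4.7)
\textbf{Proof plan for Proposition \ref{propExistcne}.} The approach is to reduce $V_n(\al,\be)$ to a quantity that depends on $\al$ only through its power spectrum $|\what{\al}(\om)|^2$, and then exploit the defining property of $\AL_n$. The key tool is Parseval's identity applied to circular convolution: for any vectors $x,\be_\ell \in \R^d$,
\[
\| x \star \be_\ell \|^2 \;=\; \frac{1}{d} \sum_{\om \in \Om_d} |\what{x}(\om)|^2 \, |\what{\be_\ell}(\om)|^2 .
\]
Applying this inside the empirical expectations gives
\[
r_{n,\ell}(\al,\be) \;=\; \frac{1}{d} \sum_{\om \in \Om_d} \Bigl( \E_n |\what{X}(\om)|^2 \;-\; |\what{\al}(\om)|^2 \,\E_n|\what{Z}(\om)|^2 \Bigr) \, |\what{\be_\ell}(\om)|^2 ,
\]
where I use the fact that $g_\al(Z) = \al \star Z$ implies $\what{g_\al(Z)}(\om) = \what{\al}(\om)\what{Z}(\om)$.

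First I would observe that the definition of $\AL_n$ is precisely the condition that the bracketed coefficient above vanishes at every $\om \in \Om_d$. Hence for any $\ald \in \AL_n$ and any $\be \in \BE$, each $r_{n,\ell}(\ald,\be)=0$, so $V_n(\ald,\be) = 0$. This proves the first claim of the proposition.

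Next, I would verify the Nash inequalities \eqref{eq:nash} at $(\ald,\bed)$. Since $V_n \geq 0$ by construction and $V_n(\ald,\bed) = 0$ by the previous step, the right-hand inequality $V_n(\ald,\bed) \leq V_n(\al,\bed)$ is immediate for every $\al \in \AL$. For the left-hand inequality, applying the first claim to $\bed$ itself and then to an arbitrary $\be \in \BE$ gives $V_n(\ald,\be) = 0 = V_n(\ald,\bed)$, which is even stronger than needed. Therefore $(\ald,\bed)$ is a Nash equilibrium of $V_n$. Finally, since $\ald \in \AL_n$, Proposition \ref{propConsistent} yields $\|\Sigma_{\ald_n}-\Sigma\|\to 0$ in probability, which is exactly the consistency condition \eqref{eq:cst}. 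Because $\bed$ is arbitrary and $\AL_n$ contains infinitely many $\ald$, the family of consistent Nash equilibria is itself infinite.

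There is no serious obstacle here: once one writes the features in the Fourier domain and notices the translation invariance of $\|x \star \be_\ell\|^2$ collapses $V_n$ to a function of $|\what{\al}(\om)|^2$, the proposition follows directly from the definition of $\AL_n$ and the non-negativity of $V_n$. The only point deserving a brief justification is the use of Parseval with the unnormalised discrete Fourier transform convention fixed in the notations at the start of the paper (the factor $1/d$ above), but this does not affect the vanishing of $r_{n,\ell}(\ald,\be)$.
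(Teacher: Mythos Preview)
Your proposal is correct and follows essentially the same approach as the paper: both use Parseval's identity to rewrite $r_{n,\ell}$ in the Fourier domain, invoke the defining property of $\AL_n$ to get $V_n(\ald,\be)=0$, verify the Nash inequalities via non-negativity of $V_n$, and appeal to Proposition~\ref{propConsistent} for consistency. Your write-up is in fact slightly more explicit than the paper's in spelling out both sides of the Nash inequality.
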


The proof is deferred to Appendix \ref{proof:propExistcne}. 
Unlike the previous two discriminators, the consistent Nash equilibrium
can be easily found among the generator $\al \in \AL_n$, and among all $\be \in \BE$. 
Indeed, due to Parseval's identity, this family of discriminator  
captures only the power spectrum of stationary processes
through the empirical expectations $ ( \E_n ( |\what{X} (\om)|^2 | )  )_{\om \in \Om_d} $. 
These expectations are the diagonal values of the matrix $F \Sigma_n F^\ast$ 
using the discrete Fourier transform $F$ on $\C^d$. 
Therefore the convolutional discriminator measures only a limited 
amount of information of $\Sigma_n$.

\paragraph{Uniqueness of consistent Nash equilibrium}

We next provide sufficient conditions for
any equilibrium of $V_n$ to be a consistent Nash equilibrium   
when $m=d$. It allows one to define a subset of
$\BE$ such that every equilibrium of the game is a consistent Nash equilibrium.

For $x \in \C^d$, we write $|x|^{\circ 2} (u) = |x(u)|^2$. 
We need the following result, 
\begin{lemm}\label{frame-beta}
	Assume $m=d$ and $ ( | \what { \be_\ell } |^{\circ 2} )_{ 0 \leq \ell < m}$ is a basis on $\R^d$, then $V_n (\al,\be) = 0$ is equivalent to $\al \in \AL_n$ almost surely. 
\end{lemm}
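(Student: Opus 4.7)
The plan is to linearize $r_{n,\ell}$ in the Fourier domain via Parseval's identity, so that the condition $V_n(\al,\be) = 0$ becomes the statement that a single vector in $\R^d$ is orthogonal to the family $(|\what{\be_\ell}|^{\circ 2})_\ell$; the basis hypothesis then forces this vector to be zero, which is precisely the definition of $\AL_n$.

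More concretely, I would first apply Parseval's identity (in exactly the same way as in the proof of Proposition \ref{propSigmaFourier}) to rewrite the convolutional features in the Fourier domain: for any $x \in \R^d$,
\[
	\| x \star \be_\ell \|^2 = \tfrac{1}{d} \sum_{\om \in \Om_d} |\what{x}(\om)|^2\, |\what{\be_\ell}(\om)|^2.
\]
Applying this to $X$ and to $g_\al(Z) = \al \star Z$, and using $\what{g_\al(Z)}(\om) = \what{\al}(\om) \what{Z}(\om)$, the residual becomes
\[
	r_{n,\ell}(\al,\be) = \tfrac{1}{d} \sum_{\om \in \Om_d} a_\al(\om)\, |\what{\be_\ell}(\om)|^2, \quad
	a_\al(\om) := \E_n|\what{X}(\om)|^2 - |\what{\al}(\om)|^2\, \E_n|\what{Z}(\om)|^2.
\]
Viewing $a_\al \in \R^d$ as a vector indexed by $\om$, we have $r_{n,\ell}(\al,\be) = \tfrac{1}{d} \langle a_\al, |\what{\be_\ell}|^{\circ 2} \rangle_{\R^d}$.

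For the forward direction, $V_n(\al,\be) = \sum_{\ell<m} r_{n,\ell}^2(\al,\be) = 0$ forces $\langle a_\al, |\what{\be_\ell}|^{\circ 2}\rangle = 0$ for every $\ell < d$. Since $(|\what{\be_\ell}|^{\circ 2})_{0 \leq \ell < d}$ is assumed to be a basis of $\R^d$, this implies $a_\al = 0$, i.e.\ $|\what{\al}(\om)|^2\, \E_n|\what{Z}(\om)|^2 = \E_n|\what{X}(\om)|^2$ for every $\om \in \Om_d$. As noted in the proof of Proposition \ref{propConsistent}, almost surely $\E_n|\what{Z}(\om)|^2 \neq 0$ for every $\om$, so we may divide and obtain $\al \in \AL_n$ almost surely. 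The reverse direction is immediate: if $\al \in \AL_n$, then $a_\al \equiv 0$, hence $r_{n,\ell}(\al,\be) = 0$ for every $\ell$ and $V_n(\al,\be) = 0$ for every $\be \in \BE$.

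There is no serious obstacle here; the only subtlety is checking the almost-sure invertibility of $\E_n|\what{Z}(\om)|^2$, which is exactly the well-definedness argument already used for $\AL_n$ in Proposition \ref{propConsistent}. The computation is really a restatement of the fact that convolutional features can only probe $\Sigma_n$ through its Fourier diagonal $(\E_n|\what{X}(\om)|^2)_\om$, and that with $m=d$ non-degenerate spectral weights, this diagonal is fully recovered by the $r_{n,\ell}$.
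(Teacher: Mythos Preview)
Your proof is correct and follows essentially the same approach as the paper: both linearize $r_{n,\ell}$ via Parseval's identity as $\langle h, |\what{\be_\ell}|^{\circ 2}\rangle$ for the real vector $h(\om) \propto \E_n|\what{X}(\om)|^2 - |\what{\al}(\om)|^2\,\E_n|\what{Z}(\om)|^2$, and then use the basis hypothesis to conclude $h=0$. The only cosmetic difference is that the paper invokes Proposition~\ref{propExistcne} for the direction $\al\in\AL_n \Rightarrow V_n=0$, whereas you observe directly that $a_\al\equiv 0$; these are the same computation.
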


It is proved in Appendix \ref{app:frame-beta}.
Next we make the same assumption on $\alb$ as in Assumption \ref{assumX},
but the other technical assumptions regarding $n$ and $d$ are not needed.

\begin{theorem}\label{thm:consistenNE}
	Assume $(\ald,\bed)$ is an equilibrium  of $V_n$. If the following assumptions are satisfied,
	\begin{itemize}
		\item $\alb \not \in \AL_0$ and $m=d$,
		\item $ ( | \what { \bed_\ell } |^{\circ 2} )_{\ell < d}$ is a basis on $\R^d$,
		\item $\forall \ell < d, \bed_\ell \not \in \AL_0$,
	\end{itemize}
	then $\ald \in \AL_n$ almost surely and $(\ald,\bed)$ is a consistent Nash equilibrium.
\end{theorem}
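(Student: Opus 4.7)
The plan is to reduce everything to Proposition~\ref{propExistcne} by proving the single assertion $\ald \in \AL_n$ almost surely, using only the stationarity condition $\nabla_\be V_n(\ald,\bed)=0$ of the equilibrium (the condition with respect to $\al$ will not be needed). Once $\ald \in \AL_n$ is established, Proposition~\ref{propExistcne} immediately gives that $(\ald,\bed)$ is a consistent Nash equilibrium for every $\bed \in \BE$.

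First, I would recast each residual in the Fourier domain via Parseval:
\[
 r_{n,\ell}(\al,\be) \;=\; \tfrac{1}{d}\!\sum_{\om \in \Om_d} \gamma_\al(\om)\,|\what{\be_\ell}(\om)|^2, \qquad \gamma_\al(\om) := \E_n|\what{X}(\om)|^2 - |\what{\al}(\om)|^2\, \E_n|\what{Z}(\om)|^2 .
\]
Equivalently, $r_{n,\ell}(\al,\be) = \be_\ell^\intercal A_n(\al)\be_\ell$ where $A_n(\al)$ is the real symmetric circulant matrix whose DFT eigenvalues are $\gamma_\al(\om)$. Since only the $\ell$-th summand of $V_n$ depends on $\be_\ell$, the equilibrium condition blockwise reads
\[
 0 \;=\; \nabla_{\be_\ell} V_n(\ald,\bed) \;=\; 4\, r_{n,\ell}(\ald,\bed)\, A_n(\ald)\,\bed_\ell , \qquad \ell = 0,\dots,d-1,
\]
so for every $\ell$ at least one of $r_{n,\ell}(\ald,\bed)=0$ and $A_n(\ald)\bed_\ell = 0$ holds.

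Now I would split into two cases. If $A_n(\ald)\bed_{\ell^{*}}=0$ for some $\ell^{*}$, then in the Fourier domain $\gamma_{\ald}(\om)\,\what{\bed_{\ell^{*}}}(\om) = 0$ for all $\om$; the assumption $\bed_{\ell^{*}} \notin \AL_0$ makes $\what{\bed_{\ell^{*}}}(\om) \neq 0$ at every frequency, so $\gamma_{\ald} \equiv 0$, and dividing by $\E_n|\what{Z}(\om)|^2 > 0$ (almost sure) yields $\ald \in \AL_n$. Otherwise $r_{n,\ell}(\ald,\bed)=0$ for every $\ell$, hence $V_n(\ald,\bed)=0$, and Lemma~\ref{frame-beta} (applicable because $(|\what{\bed_\ell}|^{\circ 2})_\ell$ is assumed to be a basis of $\R^d$) again delivers $\ald \in \AL_n$ almost surely. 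Invoking Proposition~\ref{propExistcne} closes the argument.

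The only technical point is the clean identification of $\nabla_{\be_\ell} r_{n,\ell}$ with $2 A_n(\ald)\bed_\ell$ and the observation that the circulant structure diagonalised by the DFT turns $A_n(\ald)\bed_\ell = 0$ into a pointwise condition on $\gamma_{\ald}$ — this is exactly where the third assumption $\bed_\ell \notin \AL_0$ and the second assumption (basis property) enter. Beyond that, everything is routine. In particular, the condition $\nabla_\al V_n(\ald,\bed)=0$ plays no role in the proof, which is conceptually the reason the consistent Nash equilibrium set is so large and "unique" only up to the description via $\AL_n$.
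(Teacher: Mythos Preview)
Your proof is correct, but it follows a genuinely different route from the paper's. The paper splits on whether $\ald \in \AL_0$ or not: when $\ald \notin \AL_0$ it exploits the \emph{generator} stationarity $\nabla_\al V_n(\ald,\bed)=0$ (the Fourier transform of each $\nabla_\al r_{n,\ell}$ carries the factor $\what{\ald}(\om)$, which can be cancelled precisely because $\ald \notin \AL_0$, yielding $\sum_\ell r_{n,\ell}\,|\what{\bed_\ell}(\om)|^2=0$ and hence all $r_{n,\ell}=0$ by the basis assumption); when $\ald \in \AL_0$ it uses the \emph{discriminator} stationarity together with $\alb \notin \AL_0$ to reach a contradiction. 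You instead work entirely from $\nabla_{\be_\ell} V_n=0$, and your dichotomy --- either $A_n(\ald)\bed_{\ell^\ast}=0$ for some $\ell^\ast$, or $r_{n,\ell}=0$ for every $\ell$ --- bypasses any case analysis on $\ald$. This buys you two things: the argument is shorter (one gradient condition, one clean Fourier computation), and as a byproduct it shows that neither the hypothesis $\alb \notin \AL_0$ nor the condition $\nabla_\al V_n(\ald,\bed)=0$ is actually needed for the conclusion, which is a mild strengthening of the statement. The paper's version, in exchange, makes the role of the degeneracy set $\AL_0$ for the generator more explicit, which connects naturally with its running Assumption~\ref{assumX}.
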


The proof is deferred to Appendix \ref{proof:consistenNE}. 
To satisfy the conditions on the $\bed$ in Theorem \ref{thm:consistenNE}, 
we can consider a game using an open subset of $\BE$, e.g. 
\begin{equation}\label{eq:B0}
	 \BE_0 = \{ (\be_\ell)_{  \ell < m }  | \be_\ell \in \R^d ,  \mbox{ det}  (  
	 ( | \what { \be_\ell } |^{\circ 2} )_{\ell < m}     )\neq  0 , 
	 \mbox{ min}  (  (| \what { \be_\ell  } |^{\circ 2}  )_{\ell < m}    ) > 0 \} 	 	 
\end{equation}
By following the proof of Theorem \ref{thm:consistenNE}, one can verify that
\begin{corr}
If $\alb \not \in \AL_0$ and $m=d$, then
any equilibrium of the game $(\AL, \BE_0,V_n)$ is a consistent Nash equilibrium.
\end{corr}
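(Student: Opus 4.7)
The plan is to verify that $\BE_0$ is engineered precisely so that, at any point $\bed \in \BE_0$, the two discriminator-side hypotheses of Theorem \ref{thm:consistenNE} are automatic; then the Corollary reduces to a direct appeal to that theorem. Concretely, I would first show that $\BE_0$ is open in $\BE$ so that the Definition of equilibrium (which requires $\AL$ and $\BE$ open) applies to the restricted game $(\AL,\BE_0,V_n)$. Both defining conditions of $\BE_0$ are preimages of open sets under continuous maps: the map $\be \mapsto ( | \what{\be_\ell} |^{\circ 2} )_{\ell < m}$ is polynomial in the entries of $\be$, the determinant is continuous, so $\det \neq 0$ cuts out an open subset, and the entrywise minimum over a finite set is continuous, so $\min > 0$ also cuts out an open subset. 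The intersection $\BE_0$ is therefore open in $\BE$.

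Next I would translate the two defining conditions of $\BE_0$ into the hypotheses of Theorem \ref{thm:consistenNE}. Since $m = d$, the nonvanishing of $\det( (| \what{\bed_\ell} |^{\circ 2})_{\ell < d} )$ means the $d$ vectors $| \what{\bed_\ell} |^{\circ 2} \in \R^d$ are linearly independent, hence form a basis of $\R^d$, which is exactly the second hypothesis. The positivity of $\min( (| \what{\bed_\ell} |^{\circ 2})_{\ell < d} )$ says that $| \what{\bed_\ell} (\om) |^2 > 0$ for every $\ell < d$ and every $\om \in \Om_d$, i.e.\ $\bed_\ell \notin \AL_0$ for all $\ell$, which is exactly the third hypothesis. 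The first hypothesis, $\alb \notin \AL_0$ and $m = d$, is given directly by assumption.

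Given an equilibrium $(\ald,\bed)$ of $(\AL,\BE_0,V_n)$, the vanishing gradient conditions $\nabla_\al V_n(\ald,\bed) = 0$ and $\nabla_\be V_n(\ald,\bed) = 0$ coincide with the equilibrium conditions of the ambient game $(\AL,\BE,V_n)$ at the point $(\ald,\bed)$, because $\BE_0$ is open inside $\BE$ and gradients are local objects. Consequently $(\ald,\bed)$ is also an equilibrium of $(\AL,\BE,V_n)$. Applying Theorem \ref{thm:consistenNE} then yields that $\ald \in \AL_n$ almost surely and that $(\ald,\bed)$ is a consistent Nash equilibrium.

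The main obstacle, which is mild, is just checking that $\BE_0$ is open and that the equilibrium notion behaves compatibly with restriction to this open subset; once that is in place, the statement is essentially a repackaging of Theorem \ref{thm:consistenNE} with the discriminator-side hypotheses absorbed into the definition of $\BE_0$.
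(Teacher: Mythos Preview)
Your proposal is correct and matches the paper's own justification, which simply says the Corollary follows ``by following the proof of Theorem \ref{thm:consistenNE}.'' You are slightly more explicit than the paper in verifying that $\BE_0$ is open and in noting that the gradient conditions transfer between the restricted and ambient games, but the substance is identical: the two defining constraints of $\BE_0$ encode precisely the basis condition and the $\bed_\ell \notin \AL_0$ condition of Theorem \ref{thm:consistenNE}, so the theorem applies directly.
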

Note that the set $ \BE_0 $ is not restrictive because 
Proposition \ref{propExistcne} shows that 
the choice of $\bed$ can be arbitrary once $\al \in \AL_n$. 
In the next section, we study numerically whether these conditions are implicitly satisfied 
along the dynamics of gradient-based methods to solve $(\AL, \BE,V_n)$.

\section{Numerical results}\label{sec:num}

In this section, we study gradient-descent-ascent 
optimization methods to find a consistent equilibrium of $V_n$. 
For the complex discriminator, 
we study the local stability of such methods
near the consistent non-Nash equilibrium in Theorem \ref{thm:FouriernonNash}.
When the sample size $n$ is large enough, 
we find that gradient-descent-ascent methods are nearly stable 
in the sense that the generator error remains almost constant. 
For the convolutional discriminator, we study their global convergence towards
consistent Nash equilibrium from random initialization.

\subsection{Real discriminator}

For the real discriminator, 
we provide further discussions 
about the non-existence of Nash equilibrium.
The proof of Theorem \ref{ref:realNonNE}
suggests that for any $\bed \in \BE$, 
one can expect to find an optimal 
$\al \in \AL$ such that $V_n(\al,\bed) = 0$. 
We verify that 
such solution can be computed,
and then we evaluate the
generator error $\| \Sigma - \Sigma_\al \|$
at these solutions. 

We consider  $\bed$ 
which maximizes $V_n (\al_0,\be) $,
with two different $\al_0$. 
One is a random $\al_0$, whose elements are sampled i.i.d from $\mathcal{N}(0,1/d)$. 
The other $\al_0$ is set to be the ground-truth parameter $\alb$. 
From the obtained $\bed$, we 
compute an optimal $\ald$ which solves $\min_\al V_n(\al,\bed) $
using gradient descent starting from $\al_0$. 
The optimal value $V_n(\ald,\bed) $ is expected to be very close to zero.
If the solution $\ald$ is very different from $\al_0$, then one should be able 
to detect such difference in the generator error.

Table \ref{tab:reald4} shows generator error differences
for the case $\alb = (1,0,0,0)$ and $d=4$. 
For the case of random $\al_0$,
we find that $ \| \Sigma - \Sigma_{\ald} \| < \| \Sigma - \Sigma_{\al_0} \|$, 
while for the case of $\al_0=\alb$, we find that 
$ \| \Sigma - \Sigma_{\ald} \| > \| \Sigma - \Sigma_{\al_0} \|$.
This result agrees with our theoretical analysis which shows that there is no Nash equilibrium using
the real discriminator. 
In the second case, 
$ \| \Sigma - \Sigma_{\al_0} \|=0$, thus 
the generator error difference measures directly 
$ \| \Sigma - \Sigma_{\ald} \| $. 
We find that $ \| \Sigma - \Sigma_{\ald} \| $ decreases 
with $n$ at a rate of $1/\sqrt{n}$. 
This is similar to the rate of the empirical estimator $ \| \Sigma - \Sigma_n \| $
computed later in Table \ref{tab:convErr}. 
The reason that in the second case, the 
generator error of $g_{\ald}$ is decreasing
is likely due to the initialization of the gradient-descent method (which is $\al_0$).
This is because according to \eqref{eq:aleq}, 
the solution set $\{ \al \in \AL: V_n ( \al, \bed) = 0 \}$ is likely 
to be very large, 
containing generators with both a small error and a large error. 

To compute $\max_{\be \in \BE} V_n (\al_0,\be) $,
we apply the classical power method \citep{golub2013matrix}
to the matrix $  ( \Sigma_n - \Sigma_{\al_0,n } )^2  $, 
according to Proposition \ref{prop1}.
In order to compute $\bed$ with a good precision, 
we use a total number of 
200 simulations, where $V_n$ is 
computed from independent random realizations of $X$ and $Z$. 
We keep only the first 100 simulations
where the relative difference
between $ V_n(\al_0,\bed) $ and 
$\| \Sigma_n - \Sigma_{\al_0,n} \|^2$ (computed from SVD) 
is smaller than $10^{-4}$. 
We then compute $\ald$ by 
the gradient descent method with a constant step-size. 
The method is run for at most 10000 iterations 
and it stops when the relative loss decrease is smaller than $10^{-8}$. 
We verify that all the optimal values $V_n(\ald, \bed)$ 
are very close to zero (around $10^{-10}$).

\begin{table}[t]
\normalsize
 \caption{\label{tab:reald4}
The generator error difference $\| \Sigma - \Sigma_{\ald} \|  - \| \Sigma - \Sigma_{\al_0} \| $ computed from the real discriminator using different sample size $n$ and $\al_0$. 
The mean and standard deviation of the difference are computed 
from multiple simulations. 
}
    \centering
    \begin{tabular}{ccc}
	& $\al_0$ random & \\ 
        \toprule
        $n$ & Mean &  Std. Dev \\
       \midrule \midrule
$10$ & -0.0656 & 2.0738 \\
$100$ & -0.8060 & 1.6804 \\
$1000$ & -0.6907 & 1.1686 \\
$10^4$ & -0.6143 & 0.9407 \\
$10^5$ & -0.6393 & 0.9723 \\
        \bottomrule
    \end{tabular}
    \begin{tabular}{ccc}
	& $\al_0 = \alb$ & \\
        \toprule
        $n$ & Mean &  Std. Dev \\
       \midrule \midrule
$10$ & 2.1073 & 1.8541 \\
$100$ & 0.5887 & 0.2476 \\
$1000$ & 0.1843 & 0.0611 \\
$10^4$ & 0.0568 & 0.0144 \\
$10^5$ & 0.0183 & 0.0053 \\
        \bottomrule
    \end{tabular}
\end{table}

\subsection{Complex discriminator}
Theorem \ref{thm:FouriernonNash} 
shows that there exists consistent non-Nash equilibrium in $V_n$.
We study how gradient-descent-ascent (GDA) methods \citep{gangd} 
behave near such equilibrium 
from both a discrete-time and continuous-time perspective.
The continuous-time GDA is interesting
as it is known that it can converge
towards a non-Nash equilibrium \citep{mazumdar2019finding}.

\paragraph{Discrete-time and continuous-time GDA}
For an initial $(\al^{(0)},\be^{(0)})$, 
the discrete-time GDA method iteratively updates $(\al,\be)$
by using the gradient vector of $V_n$. 
At iteration $t \geq 1$, it takes the form
\[
	\al^{(t)} =  \al^{(t-1)} - \eta \nabla_{\al} V_n ( \al^{(t-1)} , \be^{(t-1)} )  , \quad 
	\be^{(t)} =  \be^{(t-1)} + \eta \nabla_{\be} V_n ( \al^{(t-1)}, \be^{(t-1)} ) 
\]
where the step-size $\eta>0$. 
The GDA method in the discrete-time
differs from its continuous-time version, which is 
\[
	\frac{d \al^{(t)} } {dt} =  -   \nabla_{\al} V_n ( \al^{(t)}  , \be^{(t)}  )  , \quad 
	\frac{d \be^{(t)} } {dt}  =  \nabla_{\be} V_n ( \al^{(t)}, \be^{(t)} ) 
\]
For example, in bilinear games, where the symmetric part 
of the Jacobian matrix $J_n$ is zero, 
$(\al^{(t)}, \be^{(t)} )$ diverges from
the Nash equilibrium no matter how small $\eta$ is, 
whereas the continuous-time solution $( \al  , \be  ) $ 
turns around the equilibrium \citep{pmlr-v80-balduzzi18a}. 

The continuous-time GDA is simulated by the classical Runge-Kutta 4 
method as in \cite{qin2020training}, 
with the same step-size $\eta$ as the discrete-time GDA. 
In general, the step-size of $\al$ can be different 
from that of  $\be$ \citep{NIPS2017_twotimescale}. 

\paragraph{Local stability of GDA}
We study the stability of GDA near the non-Nash equilibrium $(\ald,\bed)$ in Theorem \ref{thm:FouriernonNash}. 
The initial point of GDA $(\al^{(0)},\be^{(0)})$ is taken to be a small perturbation of 
$(\ald,\bed) $ by an additive white Gaussian noise $\mathcal{ N} (0, \sigma^2 I_{d+2dm})$.
To measure the closeness of $(\al^{(t)},\be^{(t)})$ to $\AL_n$ and $\bed$ respectively, 
we compute
\[
	\epsilon_{\al}^{(t)} = \max_{\om \in \Om_d}  
	\bigg |  |  \what{\al^{(t)} } (\om) |^2 - \E_n (  |\what{X}(\om) |^2 )  /  \E_n (  |\what{Z}(\om) |^2 )  \bigg | , \quad
	\epsilon_{\be}^{(t)} = \sqrt {  \sum_\ell \| \be_\ell^{(t)}   - \bed_{\ell} \|^2   } .
\]

We consider the same case as above, $\alb = (1,0,0,0)$ and $d=4$, with $m=d$ features. 
We set $\ald$ to be a special element in the set $\AL_n$, 
and $\bed$ to be the discrete Fourier basis.\footnote{The special element is defined by $	\what{\ald} (\om) = \sqrt {    \E_n (  |\what{X}(\om) |^2 )  /  \E_n (  |\what{Z}(\om) |^2 ) } ,  \forall \om \in \Om_d $ . }
Figure \ref{fig:complexgda} shows how $\epsilon_{\al}^{(t)}$ and $\epsilon_{\be}^{(t)}$ evolves over 
the discrete-time GDA dynamics with $\eta=10^{-3} $ and $\sigma =10^{-3} $. 
We find that when $n=100$, GDA does not converge to the set $\AL_n$ in most simulations due 
to a large $\epsilon_{\al}^{(t)}$. We find that quite often it stopped when $V_n$ becomes too big (NaN). 
As $n \to \infty$, the algorithm becomes more stable.
When $n = 10000$, GDA seems to converge towards the set $\AL_n$ 
with a decreasing $\epsilon_{\al}^{(t)} $. 
It does not converge to $\bed$ as $\epsilon_{\be}^{(t)}$ does not change much 
from the initialization. 
This may be explained according to Remark \ref{rmq:uniform}, which shows that the gradient $\nabla_\be V_n$ tends to be very small at $\ald \in \AL_n$ as $n$ is big. 
This result shows that using a small number of iterations ($10^4$), 
the discrete-time GDA method is nearly stable around the set $\AL_n$ as long as $n$ is large enough. 

However, as we run the discrete-time GDA for $10^6$ iterations, 
we observe that even at $n=10000$, 
$V_n$ slowly increases with $t$ in most of the simulations. 
The unstable behavior of GDA suggests that 
the Jacobian matrix $J_n (\ald,\bed)$ has eigenvalues with negative real-part.  
In order to verify this, we 
simulate the continuous-time GDA for a large number of iterations ($10^6$) 
with various $\sigma \in \{  10^{-3} , 10^{-4} ,10^{-5}\}  $.
The result with a small $\sigma=10^{-5}$ and large $n=10000$ 
is given in Figure \ref{fig:convdynamicsLong} in Appendix \ref{app:num}. 
We find that in some simulations, the continuous-time GDA
also has a decreasing $V_n$ and then it remains constant over $t$. 
However in some simulations, it is unstable
due to a decreasing and then slowly increasing $V_n$.
It is remarkable that in most simulations, we find that
the generator error $ \|  \Sigma - \Sigma_{\al^{(t)}} \| $ remains constant over $t$.
Therefore the instability is mostly due to the discriminator
and it has a small impact on the generator quality.

  \begin{figure}[t]
      \centering
      \includegraphics[scale=0.33]{./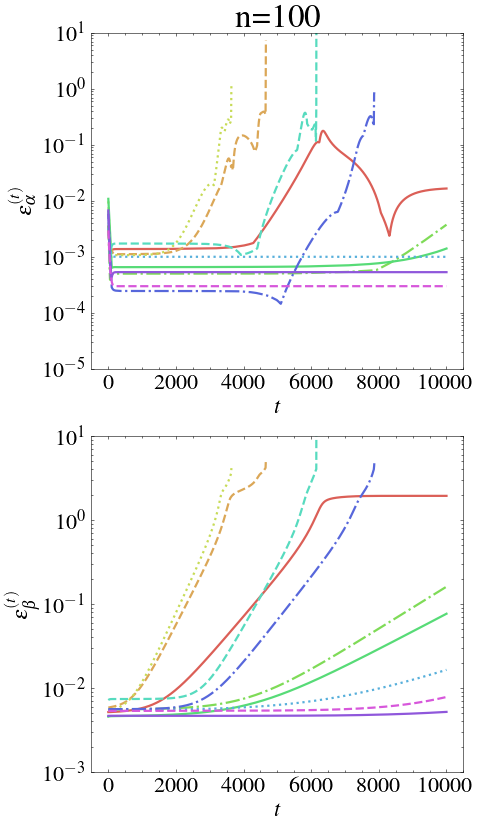} 
      \includegraphics[scale=0.33]{./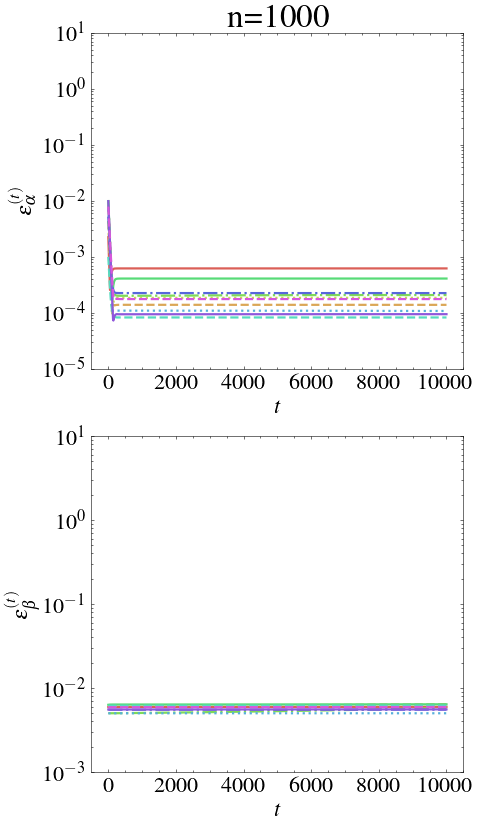} 
      \includegraphics[scale=0.33]{./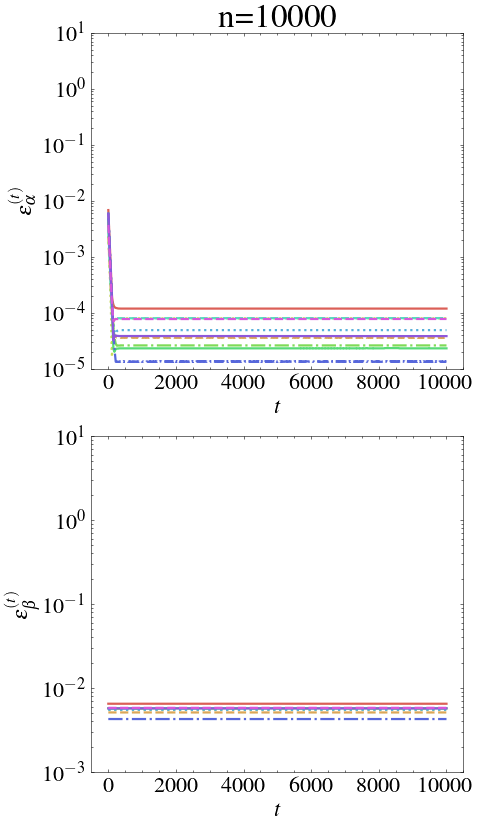} 
      \caption{
	The dynamics of $\epsilon_{\al}^{(t)}$ and $\epsilon_{\be}^{(t)}$
	as a function of the number of iterations ($10^4$ in total)
	of the discrete-time GDA on the complex discriminator. 
	Each of the 10 curves corresponds to a simulation from 
	independent samples of $X$ and $Z$. 
	The GDA stops if $V_n$ becomes nan. 
	}
      \label{fig:complexgda}
  \end{figure}

\subsection{Convolutional discriminator}

Theorem \ref{thm:consistenNE} provides 
sufficient conditions for an equilibrium of $V_n$
to be a consistent Nash equilibrium.
This motivates us to study whether such equilibrium
is the most likely limiting points of the continuous-time GDA method
starting from random initialization.

To study such global convergence, 
the initial point of GDA $(\al^{(0)},\be^{(0)})$
is sampled randomly from Gaussian white noise.\footnote{
Each element of $\al$ and $\be$ is sampled i.i.d from $\mathcal{N}(0,1/d)$.
Each vector $\be_\ell$ is further normalized to have a unit norm. 
}
To verify the sufficient conditions about $\be$ 
in Theorem \ref{thm:consistenNE}, 
we check that all the $\be^{(t)}$ belongs 
to the set $\BE_0$ defined in \eqref{eq:B0}. 
It amounts to compute the following quantities 
\[
	d_{\be}^{(t)} = | \mbox{ det}  (  | \what { \be^{(t)} } |^{\circ 2} ) | , \quad
	m_{\be}^{(t)} = \mbox{ min}  (  | \what { \be^{(t)} } |^{\circ 2} ) .
\]
By definition, $d_{\be}^{(t)} $ is the absolute value of the determinant of a matrix  
whose columns are formed by $| \what { \be_\ell^{(t)} } |^{\circ 2}$.
A non-zero  $d_{\be}^{(t)}$ implies that 
$ ( | \what {  \be^{(t)}_\ell } |^{\circ 2} )_{\ell < d}$ is a basis on $\R^d$. 
Similarly, if the minimal value of this matrix $ m_{\be}^{(t)}$ is non-zero
then $\forall \ell < d,  \be^{(t)}_\ell \not \in \AL_0$. 

The result for the case $\alb = (1,0,0,0)$ and $d=4$, with $m=d$ 
is given in Figure \ref{fig:convdynamics}. 
We observe that the conditions of Theorem \ref{thm:consistenNE} are
well respected in all the simulations. 
We also see that in almost all simulations, 
the continuous-time GDA converges to small values of $V_n$ and $\epsilon_{\al}^{(t)} $.
When $n \leq 1000$, there are several simulations which 
have a slow convergence. The slowness 
seems to be related to the smallest value of 
$d_{\be}^{(t)} $ over $t$. 
To verify the convergence of the continuous-time GDA, 
a longer run at $n=10000$ with $10^6$ iterations is given 
in Figure \ref{fig:convdynamicsLong} in Appendix \ref{app:num}. 
We find that the GDA always has a convergent $V_n$. 
This suggests that the algorithm has found a consistent Nash equilibrium
$(\al_n,\be_n)$ in the set $ \AL_n \times \BE_0$. 


Some values of $\epsilon_{\al}^{(t)} $ in Figure \ref{fig:convdynamics}
are quite close to zero, suggesting that 
at the last iteration of GDA, the generator is close to $\AL_n$. 
As a consequence, the generator error should be close 
to the empirical error $ \| \Sigma - \Sigma_n \|$. 
This is because when $n$ is large, 
$\Sigma_n $ is very close to $\Sigma_{\ald,n}$ for 
$\ald \in \AL_n$, according to \eqref{eq:asympSigma}. 
To validate this, we report the generator error 
at the last iteration of GDA in Table \ref{tab:convErr},
estimated from 100 simulations.
This error is compared to the empirical estimator of $\Sigma = \E( X X^\intercal)$ 
using the same number of samples of $X$. 
The decreasing mean and standard deviation as $n$ grows
show that the GDA converges well to the generators 
whose errors are close to those of $\AL_n$. 
We also find that when $n$ is large, the generator error (mean) is 
slightly smaller on average than the empirical estimator.
This is not so surprising as $\Sigma_{\al^{(T)}}  $ 
is a Toeplitz and circulant covariance matrix, as $\Sigma$. 
But the empirical covariance $\Sigma_n$ does not satisfy these properties. 
However, the standard deviation of the generator error is slightly larger 
than the empirical estimator. 
This should be related to the global convergence of GDA. 
Indeed, when $n$ is small ($n=10$), 
we find that in a few simulations (four among a hundred), 
the GDA diverges to NaN in $V_n$ (these cases are excluded in the error computation). 
No such divergence happens when $n \geq 100$.
In this case, the larger standard deviation 
may be related to the fluctuations of the limiting values of $\epsilon_{\al}^{(t)}$.

  \begin{figure}
      \centering
      \includegraphics[scale=0.33]{./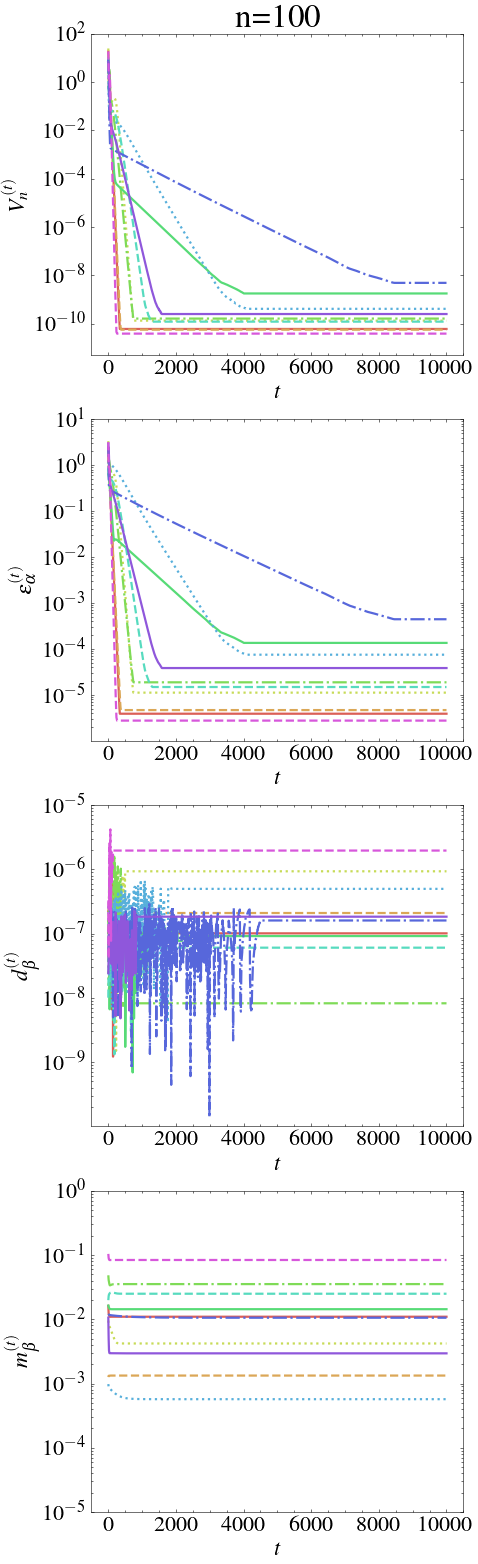} 
    \includegraphics[scale=0.33]{./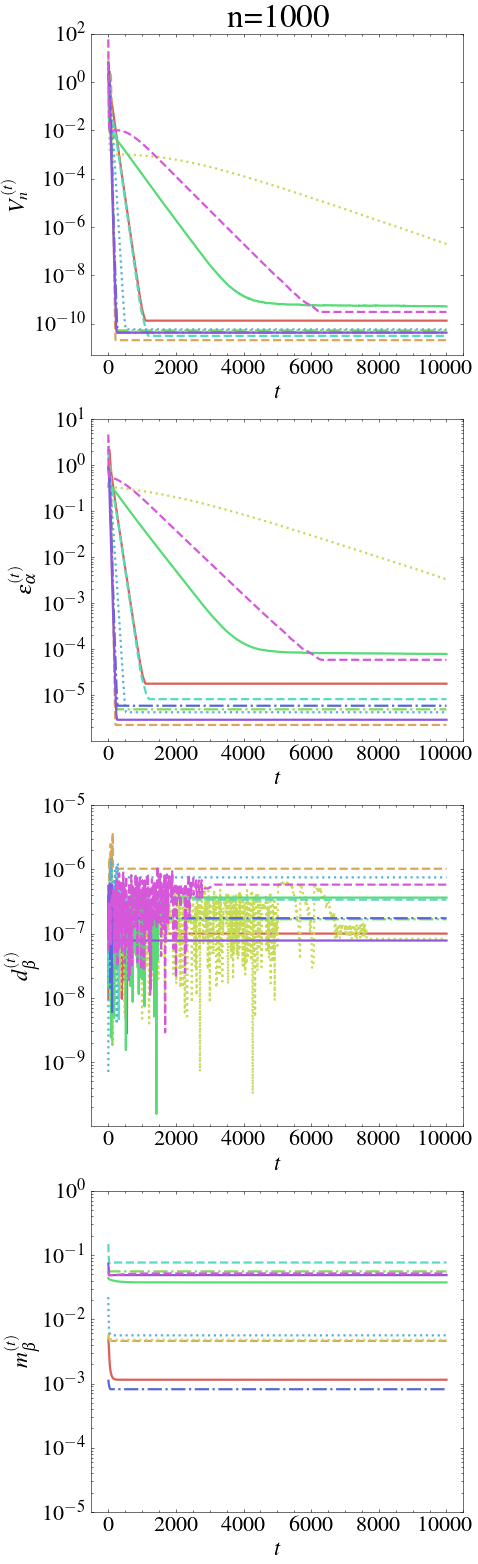} 
      \includegraphics[scale=0.33]{./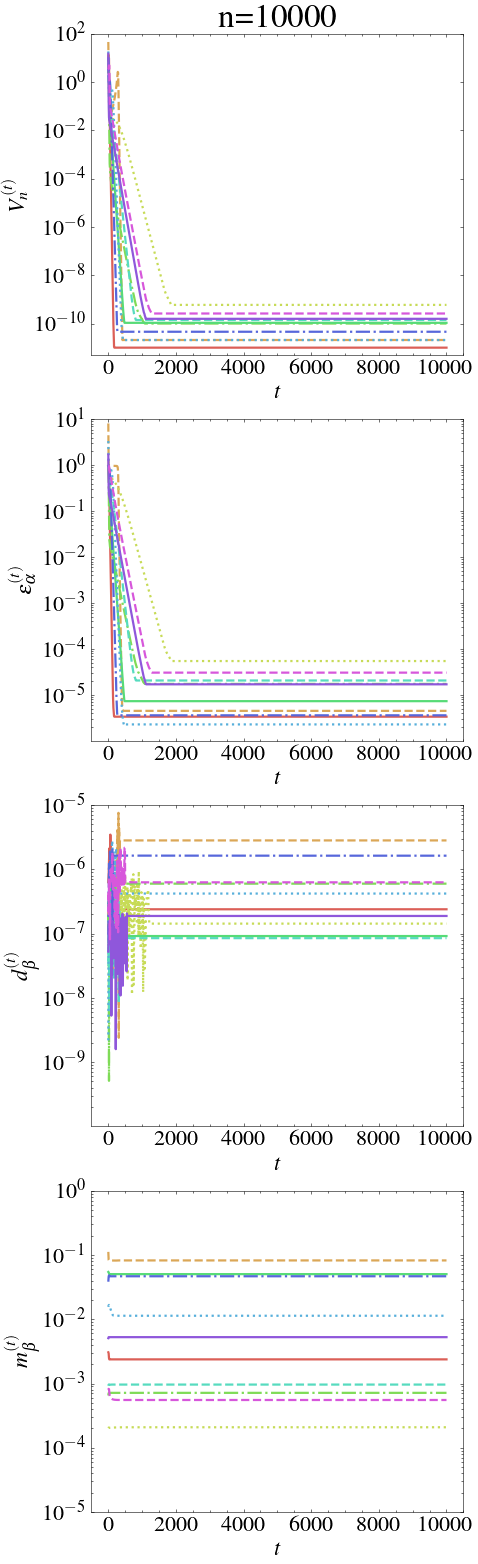} 
      \caption{
	The convergence of the continuous-time GDA as a function of number of iterations ($10^4$ in total) on the convolutional discriminator. 
	From top to bottom: 
	$V_n^{(t)} = V_n(\al^{(t)},\be^{(t)})$, $\epsilon_{\al}^{(t)}$,  $d_{\be}^{(t)}$ and $m_{\be}^{(t)}$. 
	From left to right:  $n \in \{100,1000,10000\}$. Each of the 10 curves corresponds to one simulation. 	}
      \label{fig:convdynamics}
  \end{figure}

\begin{table}
	\normalsize
	\caption{\label{tab:convErr}
		Evaluation of the generator computed by 
		the continuous-time GDA on the convolutional discriminator.
		Left: the generator error $\| \Sigma - \Sigma_{\al^{(T)}}   \| $ with $T=10^4$.
		Right: the empirical error $\| \Sigma - \Sigma_n   \| $. 
		The mean and standard deviation of these errors are computed 
		from multiple simulations and for various sample size $n$. 
	}
	\centering
	\begin{tabular}{ccc}
		& $\| \Sigma - \Sigma_{\al^{(T)}}   \| $  & \\ 
		\toprule
		$n$ & Mean &  Std. Dev \\
		\midrule \midrule
		$10$ &  0.9705 & 1.1089 \\
		$100$ &  0.2874 & 0.1583 \\
		$1000$ & 0.0790 & 0.0352 \\
		$10^4$ & 0.0233 & 0.0106 \\
		$10^5$ & 0.0089 & 0.0097 \\
		\bottomrule
	\end{tabular}
	\begin{tabular}{ccc}
		& $\| \Sigma - \Sigma_n   \| $ & \\
		\toprule
		$n$ & Mean &  Std. Dev \\
		\midrule \midrule
		$10$ & 1.0446 & 0.4060 \\
		$100$ & 0.3450 & 0.0965 \\
		$1000$ & 0.1051 & 0.0234 \\
		$10^4$ & 0.0338 & 0.0076 \\
		$10^5$ & 0.0104 & 0.0026 \\
		\bottomrule
	\end{tabular}
\end{table}

\section{Conclusion}\label{sec:conclude}

In this paper, we take a moment matching and simultaneous game perspective 
to study the existence of Nash equilibrium in the finite-sample realizable setting.
By focusing on a particular generator family of Gaussian stationary processes,
we show that GANs have a rich variability of equilibrium properties. 
Although these properties vary greatly with the choice 
of the discriminator family,
our results 
suggest that a suitable discriminator family can 
result in the existence of consistent Nash or non-Nash equilibrium 
which are both meaningful solutions.
We also find that GDA methods have nearly stable
local convergence or global convergence properties
when the number of samples is large enough. 

To extend our work to non-Gaussian distributions 
remains a challenging problem, as one would have
to construct discriminator families beyond second-order moments.  
In order to extract non-Gaussian information, 
one example is to use the rectifier non-linearity to 
define a discriminator family 
so as to match higher-order moments between two distributions \citep{li2021making}. 
As we can see from the convolutional discriminator, 
the existence of consistent Nash equilibrium depends on a 
careful balance between the discriminator family and the generator family.
As a consequence, symmetry properties of a distribution also need to be considered 
in order to capture invariant non-Gaussian information \citep{ZHANG2021199}.
The difficulty is to define what it means in general to achieve a balance.
Recent advances in designing GANs 
to avoid the curse of dimensionality 
in high dimensional density estimation \citep{bai2018approximability,JMLR:v22:20-911,Feizi2020} 
could provide some insights into this problem. 
On the other hand, the existence of consistent non-Nash equilibrium 
seems to be more relevant to practical GANs. In our case, 
its existence depends on a perfect moment-matching condition (i.e. $V_n ( \ald, \bed )=0$ in 
Theorem \ref{thm:FouriernonNash}) which allows one to extract 
sufficient information of stationary Gaussian processes. 
The sufficiency is important for GDA to be stable along $\al$, 
as it constraints the solution set 
$\{ \al \in \AL, V_n ( \al, \bed )=0 \}$ to consistent generators. 
Whether such condition holds 
for other moments remains open.  
\acks{This work is supported by 3IA Artificial and Natural Intelligence
Toulouse Institute, French "Investing for the Future - PIA3" program
under the Grant agreement ANR-19-PI3A-0004, and by 
Toulouse INP ETI and AAP unique CNRS 2022 under the Project LRGMD.}

\bibliography{references}

\begin{thebibliography}{33}
\providecommand{\natexlab}[1]{#1}
\providecommand{\url}[1]{\texttt{#1}}
\expandafter\ifx\csname urlstyle\endcsname\relax
  \providecommand{\doi}[1]{doi: #1}\else
  \providecommand{\doi}{doi: \begingroup \urlstyle{rm}\Url}\fi

\bibitem[{A, I.. Ibragimov}(1981)]{ibraHas1981}
R.~Z.~Has'minskii {A, I.. Ibragimov}.
\newblock \emph{{Statistical Estimation}}.
\newblock Springer, New York, NY, 1981.

\bibitem[Arora et~al.(2017)Arora, Ge, Liang, Ma, and Zhang]{pmlr-v70-arora17a}
Sanjeev Arora, Rong Ge, Yingyu Liang, Tengyu Ma, and Yi~Zhang.
\newblock {Generalization and Equilibrium in Generative Adversarial Nets
  ({\{}GAN{\}}s)}.
\newblock In \emph{International Conference on Machine Learning}, volume~70,
  pages 224--232, Sydney, Australia, 2017.

\bibitem[Bai et~al.(2019)Bai, Ma, and Risteski]{bai2018approximability}
Yu~Bai, Tengyu Ma, and Andrej Risteski.
\newblock {Approximability of Discriminators Implies Diversity in GANs}.
\newblock In \emph{International Conference on Learning Representations}, New
  Orleans, Louisiana, USA, 2019.

\bibitem[Balduzzi et~al.(2018)Balduzzi, Racaniere, Martens, Foerster, Tuyls,
  and Graepel]{pmlr-v80-balduzzi18a}
David Balduzzi, Sebastien Racaniere, James Martens, Jakob Foerster, Karl Tuyls,
  and Thore Graepel.
\newblock {The Mechanics of n-Player Differentiable Games}.
\newblock In \emph{International Conference on Machine Learning}, volume~80,
  pages 354--363, Stockholmsm{\"{a}}ssan, Stockholm Sweden, 2018.

\bibitem[Berard et~al.(2020)Berard, Gidel, Almahairi, Vincent, and
  Lacoste-Julien]{Berard2020A}
Hugo Berard, Gauthier Gidel, Amjad Almahairi, Pascal Vincent, and Simon
  Lacoste-Julien.
\newblock {A Closer Look at the Optimization Landscapes of Generative
  Adversarial Networks}.
\newblock In \emph{International Conference on Learning Representations}, Addis
  Ababa, Ethiopia, 2020.

\bibitem[Brock et~al.(2019)Brock, Donahue, and Simonyan]{brock2018large}
Andrew Brock, Jeff Donahue, and Karen Simonyan.
\newblock {Large Scale GAN Training for High Fidelity Natural Image Synthesis}.
\newblock In \emph{International Conference on Learning Representations}, New
  Orleans, Louisiana, USA, 2019.

\bibitem[Daskalakis and Panageas(2018)]{Daskalakis18}
Constantinos Daskalakis and Ioannis Panageas.
\newblock {The Limit Points of (Optimistic) Gradient Descent in Min-Max
  Optimization}.
\newblock In \emph{Advances in neural information processing systems}, pages
  9256--9266, Red Hook, NY, USA, 2018.

\bibitem[Daskalakis and Panageas(2019)]{Daskalakis2019}
Constantinos; Daskalakis and Ioannis Panageas.
\newblock {Last-Iterate Convergence: Zero-Sum Games and Constrained Min-Max
  Optimization}.
\newblock In \emph{10th Innovations in Theoretical Computer Science
  Conference}, pages 27:1----27:18, San Diego, California, 2019.

\bibitem[Dziugaite et~al.(2015)Dziugaite, Roy, and Ghahramani]{uaiMMDgan}
Gintare~Karolina Dziugaite, Daniel~M Roy, and Zoubin Ghahramani.
\newblock {Training Generative Neural Networks via Maximum Mean Discrepancy
  Optimization}.
\newblock In \emph{Conference on Uncertainty in Artificial Intelligence}, pages
  258--267, Arlington, Virginia, USA, 2015.

\bibitem[Farnia and Ozdaglar(2020)]{farnia20a}
Farzan Farnia and Asuman Ozdaglar.
\newblock {Do {\{}GAN{\}}s always have {\{}N{\}}ash equilibria?}
\newblock In \emph{International Conference on Machine Learning}, volume 119,
  pages 3029--3039, Virtual Event, 2020.

\bibitem[Feizi et~al.(2020)Feizi, Farnia, Ginart, and Tse]{Feizi2020}
Soheil Feizi, Farzan Farnia, Tony Ginart, and David Tse.
\newblock {Understanding GANs in the LQG Setting: Formulation, Generalization
  and Stability}.
\newblock \emph{IEEE Journal on Selected Areas in Information Theory},
  1\penalty0 (1):\penalty0 304--311, 2020.

\bibitem[Fiez and Ratliff(2020)]{fiez2020gradient}
Tanner Fiez and Lillian Ratliff.
\newblock {Gradient Descent-Ascent Provably Converges to Strict Local Minmax
  Equilibria with a Finite Timescale Separation}.
\newblock \emph{arXiv preprint arXiv:2009.14820}, 2020.

\bibitem[Golub and {Van Loan}(2013)]{golub2013matrix}
Gene~H Golub and Charles~F {Van Loan}.
\newblock \emph{{Matrix computations}}.
\newblock JHU press, 2013.

\bibitem[Goodfellow et~al.(2014)Goodfellow, Pouget-Abadie, Mirza, Xu,
  Warde-Farley, Ozair, Courville, and Bengio]{goodfellow2014generative}
Ian Goodfellow, Jean Pouget-Abadie, Mehdi Mirza, Bing Xu, David Warde-Farley,
  Sherjil Ozair, Aaron Courville, and Yoshua Bengio.
\newblock {Generative adversarial nets}.
\newblock In \emph{Advances in neural information processing systems}, pages
  2672--2680, Montr{\'{e}}al, Canada, 2014.

\bibitem[Heusel et~al.(2017)Heusel, Ramsauer, Unterthiner, Nessler, and
  Hochreiter]{NIPS2017_twotimescale}
Martin Heusel, Hubert Ramsauer, Thomas Unterthiner, Bernhard Nessler, and Sepp
  Hochreiter.
\newblock {GANs Trained by a Two Time-Scale Update Rule Converge to a Local
  Nash Equilibrium}.
\newblock In \emph{Advances in Neural Information Processing Systems}, pages
  6629--6640, Long Beach, CA, USA, 2017.

\bibitem[Jin et~al.(2020)Jin, Netrapalli, and Jordan]{jin2020local}
Chi Jin, Praneeth Netrapalli, and Michael Jordan.
\newblock {What is local optimality in nonconvex-nonconcave minimax
  optimization?}
\newblock In \emph{International Conference on Machine Learning}, pages
  4880--4889, Virtual Event, 2020.

\bibitem[Laraki et~al.(2019)Laraki, Renault, and Sorin]{game2019}
Rida Laraki, J{\'{e}}r{\^{o}}me Renault, and Sylvain Sorin.
\newblock \emph{{Mathematical Foundations of Game Theory}}.
\newblock Springer Cham, 2019.

\bibitem[Lei et~al.(2020)Lei, Lee, Dimakis, and Daskalakis]{lei2020sgd}
Qi~Lei, Jason Lee, Alex Dimakis, and Constantinos Daskalakis.
\newblock {Sgd learns one-layer networks in wgans}.
\newblock In \emph{International Conference on Machine Learning}, pages
  5799--5808, Virtual Event, 2020.

\bibitem[Li and Dou(2021)]{li2021making}
Yuanzhi Li and Zehao Dou.
\newblock {Making Method of Moments Great Again? -- How can GANs learn
  distributions}.
\newblock \emph{arXiv preprint arXiv:2003.04033}, 2021.

\bibitem[Li et~al.(2015)Li, Swersky, and Zemel]{pmlr-v37-li15}
Yujia Li, Kevin Swersky, and Rich Zemel.
\newblock {Generative Moment Matching Networks}.
\newblock In \emph{International Conference on Machine Learning}, volume~37,
  pages 1718--1727, Lille, France, 2015.

\bibitem[Liang(2021)]{JMLR:v22:20-911}
Tengyuan Liang.
\newblock {How Well Generative Adversarial Networks Learn Distributions}.
\newblock \emph{Journal of Machine Learning Research}, 22\penalty0
  (228):\penalty0 1--41, 2021.

\bibitem[Liu et~al.(2017)Liu, Bousquet, and Chaudhuri]{liu2017approximation}
Shuang Liu, Olivier Bousquet, and Kamalika Chaudhuri.
\newblock {Approximation and convergence properties of generative adversarial
  learning}.
\newblock In \emph{Advances in Neural Information Processing Systems},
  volume~30, Long Beach, CA, USA, 2017.

\bibitem[Mazumdar et~al.(2019)Mazumdar, Jordan, and
  Sastry]{mazumdar2019finding}
Eric~V Mazumdar, Michael~I Jordan, and S~Shankar Sastry.
\newblock {On finding local nash equilibria (and only local nash equilibria) in
  zero-sum games}.
\newblock \emph{arXiv preprint arXiv:1901.00838}, 2019.

\bibitem[Mescheder et~al.(2017)Mescheder, Nowozin, and
  Geiger]{mescheder2017numerics}
Lars Mescheder, Sebastian Nowozin, and Andreas Geiger.
\newblock {The numerics of gans}.
\newblock In \emph{Advances in Neural Information Processing Systems}, pages
  1825--1835, Long Beach, CA, USA, 2017.

\bibitem[Nagarajan and Kolter(2017)]{gangd}
Vaishnavh Nagarajan and J~Zico Kolter.
\newblock {Gradient Descent GAN Optimization is Locally Stable}.
\newblock In \emph{Advances in Neural Information Processing Systems}, pages
  5591--5600, Long Beach, CA, USA, 2017.

\bibitem[Nocedal and Wright(2006)]{nocedal2006numerical}
J~Nocedal and S~J Wright.
\newblock \emph{{Numerical Optimization, Second Edition}}.
\newblock Springer New York, 2006.

\bibitem[Priestley(1981)]{Priestley}
M~B Priestley.
\newblock \emph{{Spectral analysis and time series}}.
\newblock Academic Press, London ; New York :, 1981.

\bibitem[Qin et~al.(2020)Qin, Wu, Springenberg, Brock, Donahue, Lillicrap, and
  Kohli]{qin2020training}
Chongli Qin, Yan Wu, Jost~Tobias Springenberg, Andy Brock, Jeff Donahue,
  Timothy Lillicrap, and Pushmeet Kohli.
\newblock {Training generative adversarial networks by solving ordinary
  differential equations}.
\newblock In \emph{Advances in Neural Information Processing Systems},
  volume~33, pages 5599--5609, Virtual Event, 2020.

\bibitem[Ratliff et~al.(2013)Ratliff, Burden, and Sastry]{Ratliff13}
Lillian~J Ratliff, Samuel~A Burden, and S~Shankar Sastry.
\newblock {Characterization and computation of local Nash equilibria in
  continuous games}.
\newblock In \emph{Conference on Communication, Control, and Computing}, pages
  917--924, Allerton, USA, 2013.

\bibitem[Sun et~al.(2020)Sun, Fang, and Schwing]{SunNeurips2020}
Ruoyu Sun, Tiantian Fang, and Alexander Schwing.
\newblock {Towards a Better Global Loss Landscape of GANs}.
\newblock In \emph{Advances in Neural Information Processing Systems},
  volume~33, pages 10186--10198, Virtual Event, 2020.

\bibitem[Vershynin(2018)]{vershynin_2018}
Roman Vershynin.
\newblock \emph{{High-Dimensional Probability}}.
\newblock Cambridge University Press, 2018.

\bibitem[Wang et~al.(2020)Wang, Zhang, and Ba]{Wang2020ICLR}
Yuanhao Wang, Guodong Zhang, and Jimmy Ba.
\newblock {On Solving Minimax Optimization Locally: A Follow-the-Ridge
  Approach}.
\newblock In \emph{International Conference on Learning Representations}, Addis
  Ababa, Ethiopia, 2020.

\bibitem[Zhang and Mallat(2021)]{ZHANG2021199}
Sixin Zhang and Stéphane Mallat.
\newblock Maximum entropy models from phase harmonic covariances.
\newblock \emph{Applied and Computational Harmonic Analysis}, 53:\penalty0
  199--230, 2021.

\end{thebibliography}

\appendix

\section{Proof of Proposition \ref{prop1}}\label{proofkkt}

\begin{proof}
	We apply the KKT necessary condition \citep[Theorem 12.1]{nocedal2006numerical} to the following Lagrangian function for $ \be  \in \BE$ and $\la \geq 0$, 
	\[
	L( \be, \la) = -  ( \be^\intercal (\Sigma_n - \Sigma_{\al,n}) \be )^2 - \lambda ( 1 - \| \be \|^2 ) 
	\]
	If $\bed$ is an optimal solution of $V_n$, then there exists $\lad$, such that:
	\begin{equation}\label{eq:kkt}
	\nabla_\be L ( \bed,\lad  ) = 0 , \quad 
	\lad ( 1 - \| \bed \|^2) = 0, \quad 
	\| \bed \|^2 \leq 1, \quad
	\lad \geq 0 .
	\end{equation}
	Let $A = \Sigma_n - \Sigma_{\al,n}$, then
	$	\nabla_\be L ( \bed,\lad  ) =   -4 (  \be^\intercal A \be  ) A \be + 2 \la \be   $. 
	From \eqref{eq:kkt}, we have two situations:
	\begin{itemize}
		\item Case $\lad > 0$ and $ \| \bed \| = 1 $: 
		$\nabla_\be L ( \bed,\lad  ) =0$ implies that $ \lad \bed   =2  (  (\bed)^\intercal A \bed  ) A \bed  $. 
		This means that 
		$\bed $ is an eigenvector of $A$, whose eigenvalue 
		$\mu = \frac{  \lad } {   2 (  (\bed)^\intercal A \bed  ) } $ is non-zero
		because $(\bed)^\intercal A \bed $ mush be strictly positive. 
		Therefore the value $V_n (\bed,\lad) = \mu^2 > 0 $. Note that this situation can not happen if $A = 0$. 
		\item Case $ \lad  = 0 $:  $\nabla_\be L ( \bed,\lad  )  = 0$ implies that $ (  (\bed)^\intercal A \bed  ) A \bed = 0$. Therefore either $(\bed)^\intercal A \bed  =  0$ or $A \bed = 0$, and $V_n(\bed,\lad) = (  (\bed)^\intercal A \bed  )^2 = 0$.
	\end{itemize}
	 When $A \neq 0$, the above analysis shows that the optimal solution of $V_n$ is attained at $\bed$. Moreover, it is a unit-norm eigenvector of $ A $ which has the maximal absolute eigenvalue so that $\mu^2$ is maximal. The maximal value of $V_n$ thus coincides with $\| \Sigma_n - \Sigma_{\al,n } \|^2  $. 	
\end{proof}

\section{Proof of Lemma \ref{lem:nonzero}}\label{proofnonzero}
\begin{proof}
	For any $\al \in \AL$, 
	we show that the event $ B_1 = \{ \| \Sigma_n - \Sigma_{\al,n} \| = 0 \} $
has a zero probability under Assumption \ref{assumX}.
	Indeed, under $B_1$, $\Sigma_n = \Sigma_{\al,n}$ and therefore $\forall (u,u')$, 
	\[
		\frac{1}{n} \sum_{i=1}^n \alb \star \bar{z}_i (u) \alb \star \bar{z}_i (u')  = 
		\frac{1}{n} \sum_{i=1}^n \al \star z_i (u) \al \star z_i (u') . 
	\]
	where $\{ z_i \}$ and $\{ \bar{z}_i \}$ are the $n$ i.i.d samples of $Z$ and $\bar{Z}$.  
	
	Applying the Fourier transform along both $u$ and $u'$, we have equivalently $\forall (\om,\om') \in \Om_d^2$
	\[
		\frac{1}{n} \sum_{i=1}^n \what{\alb} (\om) \what{\bar{z}}_i (\om) 
								 \what{\alb} (\om')^\ast \what{\bar{z}}_i (\om')^\ast   = 
		\frac{1}{n} \sum_{i=1}^n \what{\al} (\om) \what{z}_i (\om) 
								\what{\al} (\om')^\ast \what{z}_i (\om')^\ast 								 
	\]
	
	As $d $ is even, we take $\om=\om'=0$ or $\om=\om'=\pi$, and it follows that 
	\begin{align} \label{eq:z0}
		|  \what{\alb} (0) |^2  \frac{1}{n} \sum_{i=1}^n  | \what{\bar{z}}_i (0) |^2  &=  
		 | \what{\al} (0) |^2   \frac{1}{n} \sum_{i=1}^n | \what{z}_i (0) |^2   \\ 
		|  \what{\alb} ( \pi ) |^2  \frac{1}{n} \sum_{i=1}^n  | \what{\bar{z}}_i (  \pi  ) |^2  &=  | \what{\al} (  \pi  ) |^2
		\frac{1}{n} \sum_{i=1}^n  | \what{z}_i (  \pi  ) |^2  \label{eq:z1}
	\end{align}
	
	Taking $\om = 0$ and $\om'=\pi$, we have 
	\begin{align}\label{eq:z2}
		 \what{\alb} (0)  \what{\alb} (\pi)  \frac{1}{n} \sum_{i=1}^n   \what{\bar{z}}_i (0) \what{\bar{z}}_i (\pi)  =  
		 \what{\al} (0)  \what{\al} (\pi)  \frac{1}{n} \sum_{i=1}^n   \what{z}_i (0) \what{z}_i (\pi) 
	\end{align}
	
	To arrange the above terms \eqref{eq:z0},\eqref{eq:z1},\eqref{eq:z2}, 
	we need the following event to avoid division by zero
	\begin{align*}
		B_2 =  \left \{ \frac{1}{n} \sum_{i=1}^n  | \what{\bar{z}}_i (0) |^2 \neq 0 , \, 	  
		 \frac{1}{n} \sum_{i=1}^n  | \what{z}_i (0) |^2 \neq 0, \,
		 \frac{1}{n} \sum_{i=1}^n  | \what{\bar{z}}_i (\pi) |^2 \neq 0,\,
		 \frac{1}{n} \sum_{i=1}^n  | \what{z}_i (\pi) |^2 \neq 0 \right \}
	\end{align*}
	
	As $\alb \not \in \AL_0$, we have $\what{\alb}(0) \neq  0$ and $\what{\alb}(\pi) \neq  0$.
	Then under $B_1\cap B_2$, it follows from \eqref{eq:z0},\eqref{eq:z1} and \eqref{eq:z2} that the following event holds, 
	\[
		B_3 = \left \{ \frac{  | \frac{1}{n} \sum_{i=1}^n   \what{\bar{z}}_i (0) \what{\bar{z}}_i (\pi)   |^2  } {   \frac{1}{n} \sum_{i=1}^n  | \what{\bar{z}}_i (0) |^2  \frac{1}{n} \sum_{i=1}^n  | \what{\bar{z}}_i (  \pi  ) |^2  }  = 
		\frac{  | \frac{1}{n} \sum_{i=1}^n   \what{z}_i (0) \what{z}_i (\pi)   |^2  } {   \frac{1}{n} \sum_{i=1}^n  | \what{z}_i (0) |^2  \frac{1}{n} \sum_{i=1}^n  | \what{z}_i (  \pi  ) |^2  }   
		\right \}
	\]
	This means that $ B_1 \cap B_2 \subset  B_3$. 
	
	We claim that 
	\begin{equation}\label{eq:events}
		\mbox{Prob} ( B_2) = 1 , \quad 
		\mbox{Prob} ( B_3) = 0
	\end{equation}
	From \eqref{eq:events}, the statement of this lemma holds, 
	because we will have $ \mbox{Prob} (B_1)  =  0$
	by using the fact that $\mbox{Prob} (B_1 \cap B_2)  \leq \mbox{Prob} ( B_3)=0$ and $ \mbox{Prob} (B_1 ) = \mbox{Prob} (B_1 \cap B_2) -  \mbox{Prob} (B_2)  + \mbox{Prob} (B_1 \cup B_2)  = \mbox{Prob} (B_1 \cap B_2) $. 
	
	To show \eqref{eq:events}, 
	we denote $ y (\om) = ( \what{z}_i (\om) )_{i \leq n} \in \C^n$, then 
	both $ y (0)$ and $y(\pi)$ follow $\mathcal{N}(0, \sigma^2 I_n)$ for $\sigma>0$. 
	Moreover $ y (0)$ is independent of $y(\pi)$ since for any $i \leq n$, $ \what{z}_i (0) $ is independent of $ \what{z}_i ( \pi  ) $. Similarly for $ \bar{y} (\om) = ( \what{\bar{z}}_i (\om) )_{i \leq n} \in \C^n$ at $\om=0$ or $\om=\pi$. 
	Therefore we have $	\mbox{Prob} ( B_2) = 1 $ because 
	the distribution of $ \| y (0) \|^2 $ (and $\| y (\pi) \|^2$, $\| \bar{y} (0) \|^2$, $\| \bar{y} (\pi) \|^2$) is Chi-square of degree $n$ (up to some constant normalization). 
	
	To show $\mbox{Prob} ( B_3) = 0$, 
	we notice that the event $B_3$ is equivalent to 
	\begin{equation}\label{eq:joint}
		\frac{  | \lb \bar{y}(0) ,  \bar{y}( \pi ) \rb |^2  } {  
		\|  \bar{y}(0)  \|^2  \|  \bar{y}( \pi  )  \|^2 }  = 
		\frac{  | \lb y(0) ,  y( \pi ) \rb |^2  } {  
		\|  y(0)  \|^2  \|  y( \pi  )  \|^2 }
	\end{equation}
	We next argue that for $n \geq 2$, the distribution of LHS or RHS of \eqref{eq:joint} has a continuous density on $[0,1]$. As the LHS and RHS 
	are sampled from the same distribution independently, the chance that they are the same is thus zero. To verify this, it is sufficient to derive the distribution of 
	\[
		\frac{   \lb Y_0 ,  Y_1 \rb   } {  
			\|  Y_0  \|   \|  Y_1   \| }
	\]
	where $Y_0 \sim \mathcal{N}(0,I_n)$ and $Y_1 \sim \mathcal{N}(0,I_n)$ are independent Gaussian white noise in $\R^{n}$. 
	
	Assume $F(\theta) = \mbox{Prob} \left(  \frac{   \lb Y_0 ,  Y_1 \rb   } {  
		\|  Y_0  \|   \|  Y_1   \| } \leq \sin(\theta) \right)$ for $\theta \in [-\pi/2,\pi/2]$ . To compute $F(\theta)$, we denote the coordinates of $Y_0 \in \R^n$ by $x=(x_1,\cdots,x_n)$. Then by the rotational invariant property of the distribution of $Y_0$ and $Y_1$, we fix $Y_1 = (1,0,\cdots,0)$ and obtain 
	\[
		F(\theta) = \frac{1}{ (\sqrt{2 \pi})^n }   \int_{  x_1 \leq \sin(\theta) \| x \| }  
		e^{- \| x \|^2  / 2} dx 
	\]
	
	We first focus on the case $\theta \in [0,\pi/2]$. 
	The condition $ x_1 \leq \sin(\theta) \| x \| $ is equivalent to 
	\[
		x_1 \leq \sqrt{  \frac{ \sin^2(\theta) }{ 1 - \sin^2(\theta) }  ( x_2^2 + \cdots + x_n^2  )  }   
	\]
	Denote the cumulative function of the standard normal distribution by $\Phi$. By writing 
	$(x_2,\cdots x_n)$ in the spherical coordinate for $r > 0, \psi_2 \in [0,\pi], \cdots \psi_{n-2} \in [0,\pi]$, and $\psi_{n-1} \in [0,2 \pi]$, 
	\[
		\left (r \cos(\psi_2),r \sin(\psi_2) \cos( \psi_3),\cdots, r \sin(\psi_2) \cdots \sin(\psi_{n-2}  )\sin( \psi_{n-1} ) \right), 
	\]
	we have that 
	\begin{align*}
		F(\theta) &= \frac{1}{ (\sqrt{2 \pi})^n }  \int e^{ - (x_2^2 + \cdots + x_n^2) / 2 }  \int_{-\infty}^{ \tan(\theta) \sqrt{x_2^2 + \cdots + x_n^2 } } 
		e^{ - x_1^2 / 2} d x_1  d x_2 \cdots d x_n 	\\ 
		&= \frac{1}{ (\sqrt{2 \pi})^{n-1} }  \int e^{ - (x_2^2 + \cdots + x_n^2) / 2 } 
		\Phi \left(   \tan(\theta) \sqrt{x_2^2 + \cdots + x_n^2 }  \right)  
		 d x_2 \cdots d x_n 	 \\ 
		 &=
		 \frac{1}{ (\sqrt{2 \pi})^{n-1} }  \int e^{ - r^2 / 2 } 
		 \Phi \left(   \tan(\theta) r  \right)  
		 r^{n-2} \sin^{n-3} ( \psi_2) \cdots \sin( \psi_{n-2} ) dr d \psi_2  \cdots d \psi_{n-1}	\\	
		 &=  c_n \int_{0}^\infty  	
		  e^{ - r^2 / 2 } 
		 \Phi \left(   \tan(\theta) r  \right)  
		 r^{n-2} dr 
	\end{align*}
	with a normalization constant $c_n$. It follows that the density
	\begin{align*}
	F'(\theta) &= 	c_n 
	\int_{0}^\infty  	
	e^{ - r^2 / 2 } 
	\Phi' \left(   \tan(\theta) r  \right)  \frac{r}{ \cos ^2(\theta) }
	r^{n-2} dr 
	 \\ 	  	
	 &= \frac{c_n }{ \sqrt{2 \pi} } 
	 \int_{0}^\infty  	
	 e^ {   - ( 1 + \tan^2(\theta)  ) r^2 /2  } \frac{r^{n-1} }{ \cos^2 (\theta) }
	  dr 	 \\
	  &\propto \frac{1 }{  \cos^2 (\theta)  } 
	  \int_{0}^\infty  	
	  e^ {   - ( 1 + \tan^2(\theta)  ) t /2  } t^{ (n-2)/2 } dt 
	\end{align*}
	This integral can be computed from the Gamma distribution which gives for $\al > 0, \be > 0$, 
	\[
		\int_0^\infty t^{ \al - 1} e^{ - \be t }  dt = \Gamma( \al ) / \be^\al  . 
	\]	
	Taking $\al = n / 2 $, and $\be = (1 + \tan^2(\theta) )/2$, we conclude that for $\theta \in [0,\pi/2]$, 
	\begin{equation}\label{eq:densityF}
		F'(\theta) \propto \frac{ 1 } { \cos^2(\theta) }   \frac{ \Gamma(n/2 ) } {  ( \frac{ 1 + \tan^2(\theta) }{2} )^{ n /2 } }   \propto \cos^{n-2} (\theta) .
	\end{equation}
	As the distribution of $\lb Y_0, Y_1 \rb$ is symmetric around zero, the density $F'(\theta)$ is a symmetric function around $\theta=0$. Thus \eqref{eq:densityF} holds also for $\theta \in [-\pi/2 , 0]$. As $\sin(\theta)$ is differentiable and monotone increasing on $[-\pi/2,\pi/2]$, a change of variable shows that the density of $  \frac{ \lb Y_0 ,  Y_1 \rb   } {  	\|  Y_0  \|   \|  Y_1 \| } $ exists and it is supported on $[-1,1]$. 	
\end{proof}

\section{Proof of Remark \ref{rmq:uniform}}\label{app:uniform}

We show that 
\[
  	\sup_{  \ald \in \AL_n  }  \| 	\Sigma_n - \Sigma_{ \ald , n}    \|  \to  0 , \quad n \to \infty, \quad \mbox{in probability}
\]

\begin{proof}
The key idea is to show that $\AL_n $ is a bounded set with high probablity when $n$ is large enough, and to use classfical results about the convergence of emprical covariance matrices through a uniform upper bound of $\| 	\Sigma_n - \Sigma_{ \ald , n}    \|$.

Using the trianglar inequality of the norm $\| \cdot \|$, we have 
\begin{equation}\label{eq:trian}
	\sup_{  \ald \in \AL_n  }  \|  \Sigma_{ \ald , n } - \Sigma_n \|   \leq 
\sup_{  \ald \in \AL_n  }   \|  \Sigma_{ \ald, n } - \Sigma_{\ald} \| + \sup_{  \ald \in \AL_n  } \| \Sigma_{\ald} - \Sigma \| +  \| \Sigma - \Sigma_n  \| 
\end{equation}

It is sufficient to show that each of the three terms on the RHS of \eqref{eq:trian} converges to zero in probablity. Since  $X$  is a Gasussian distribution on $\R^d$, the convergence of $ \| \Sigma - \Sigma_n  \| $ follows immediately from the classfical results established for sub-Gaussian distributions \cite[Theorem 4.7.1]{vershynin_2018}.

Recall that 
\[
	\AL_n =  \{  \al \in \R^d | | \what{\al}(\om) |^2 = \E_n (  |\what{X}(\om) |^2 )  /  \E_n (  |\what{Z}(\om) |^2 ), \forall \om \in \Om_d \} . 
\]

Proposition \ref{propSigmaFourier} implies that, $\forall \ald \in \AL_n$
\begin{align*}
	  \| \Sigma_{\ald} - \Sigma \|
&= \max_{\om \in \Om_d} |  |\what{\ald}  (\om) |^2 - | \what{\alb} (\om) |^2  |   \\ 
&= 
\max_{\om \in \Om_d}  | \E_n (  |\what{X}(\om) |^2 )  /  \E_n (  |\what{Z}(\om) |^2 ) - | \what{\alb} (\om) |^2  | \\
&= 
\max_{\om \in \Om_d}  |  (\E_n (  |\what{\Zb}(\om) |^2 )  /  \E_n (  |\what{Z}(\om) |^2 ) -  1) | \what{\alb} (\om) |^2  |
\end{align*}
Applying the law of large numbers to the $n$ samples of $\Zb$ and the $n$ samples of $Z$, we have 
\[
	\sup_{  \ald \in \AL_n  } \| \Sigma_{\ald} - \Sigma \|    \to  0 , \quad n \to \infty , \quad  \mbox{in probability}
\]

For the convergence of $\sup_{  \ald \in \AL_n  }   \|  \Sigma_{ \ald, n } - \Sigma_{\ald} \| $, we note that for any $\al \in \AL$, $\forall (u,u')$, 
\[
	 ( \Sigma_{ \al, n }  - \Sigma_{\al}  ) (u,u') =  \sum_{v,v'}  \al(u-v) \al(u'-v') (  \Sigma_n^0 (v,v') - \Sigma^0 (v,v') ) , 
\]
where $   \Sigma_n^0 (v,v')  = \E_n ( Z(v) Z(v') )$ and $   \Sigma^0 (v,v') =  \E ( Z(v) Z(v') )$. 
Let $\al_u (v) = \al(u-v) $, then 
\[
	 ( \Sigma_{ \al, n }  - \Sigma_{\al}  ) (u,u') = \al_u^\intercal  ( \Sigma_n^0  - \Sigma^0 ) \al_{u '}
\]

Denote $ M_\al = d \| \al \|^2$, we next check that 
\begin{equation}\label{eq:upperM}
	\| \Sigma_{ \al, n }  - \Sigma_{\al}  \|^2  \leq M_\al^2 \|  \Sigma_n^0  - \Sigma^0 \|^2 
\end{equation}

To verify \eqref{eq:upperM}, we use the Cauchy-Schwartz inequality, 
\begin{align*}
	  \| \Sigma_{ \al, n }  - \Sigma_{\al}  \|^2  
&= \sup_{ \| w \| \leq 1 }   \|   ( \Sigma_{ \al, n }  - \Sigma_{\al} ) w \|^2 \\ 
&= \sup_{ \| w \| \leq 1 } \sum_{u} |  \sum_{u'}  \al_u^\intercal  ( \Sigma_n^0  - \Sigma^0 ) \al_{u'}   w(u')  |^2  \\
&\leq \sup_{ \| w \| \leq 1 } \sum_{u}  \|      \al_u \|^2   \cdot \|  \sum_{u'} ( \Sigma_n^0  - \Sigma^0 ) \al_{u'}   w(u')  \|^2   \\ 
&\leq \sup_{ \| w \| \leq 1 } M_\al   \|  \Sigma_n^0  - \Sigma^0 \|^2 \|  \sum_{u'}  \al_{u'}   w(u')  \|^2   \\ 
&= M_\al   \|  \Sigma_n^0  - \Sigma^0 \|^2 \sup_{ \| w \| \leq 1 }   \sum_{v'} |  \sum_{u'}  \al_{u'}   (v') w(u')  |^2  \\
&\leq M_\al^2   \|  \Sigma_n^0  - \Sigma^0 \|^2 . 
\end{align*}

From \eqref{eq:upperM}, we conclude that 
\[
	\sup_{  \ald \in \AL_n  } \| \Sigma_{\ald} - \Sigma \|   \leq  ( \sup_{\ald \in \AL_n } M_{\ald} ) \|  \Sigma_n^0  - \Sigma^0 \|	 
\]

For any $\ald \in \AL_n$, 
\[
	\| \ald \|^2 = d^{-1} \sum_{\om \in \Om_d} | \what{\ald} (\om) |^2     = d^{-1}  \sum_{\om   \in \Om_d }  \frac{ \E_n (  |\what{\Zb}(\om) |^2 )  } {  \E_n (  |\what{Z}(\om) |^2 )  }    |  \what{\alb}   (\om ) |^2   
\]
Therefore $M_{\ald} = d \| \ald \|^2$ does not vary in the set $\AL_n$. 

According to \cite[Theorem 4.7.1]{vershynin_2018}, 
$\|  \Sigma_n^0  - \Sigma^0 \| $ converges to zero in probability when $n \to \infty$. 
Therefore, it remains to show that $\exists C>0$, $\forall \delta>0$, $\exists N \in \Z$ such that if $n \geq N$, then 
\[
	\mbox{ Prob}  (  \sup_{\ald \in \AL_n}  M_{\ald}  \leq C ) > 1 - \delta. 
\]
This is true because the law of large numbers, applied to $Z$ and $\Zb$, implies that 
\[
	\max_{ \om \in \Om_d } \left |  \frac{ \E_n (  |\what{\Zb}(\om) |^2 )  } {  \E_n (  |\what{Z}(\om) |^2 )  }   - 1 \right |  \to 0    , \quad n \to \infty, \quad \mbox{in probability}, 
\]
i.e. $ \sup_{\ald \in \AL_n} M_{\ald}  \to d \| \alb \|^2$ as $n \to \infty$ in probability.

\end{proof}

%
%
%
%
%
%
%
%
%
%
%
%
%
%
%

\section{Proof of Proposition \ref{propExistcne}}\label{proof:propExistcne}

\begin{proof}
	For $ \ald \in \AL_n$, we are going to show that $r_{n,\ell} ( \ald, \be) =0$ 
	for all $\ell$. This implies that $V_n  ( \ald, \be ) = \sum_{\ell}  r_{n,\ell}^2  ( \ald, \be )  = 0$
	for $\forall \be \in \BE$. 
	Indeed, using Parseval's identity, $\forall x \in \R^d$
	\[
	\|  x \star \be_\ell  \|^2 =  d^{-1} \sum_{\om \in \Om_d} |\what{x} (\om)|^2 |  \what{\be}_\ell (\om)  |^2 
	\]
	This implies that for $\forall \ell < m$, $\forall \be \in \BE$, 
	\[
	r_{n,\ell} ( \ald, \be) \propto \sum_\om 
	\left ( \E_n (   |\what{X} (\om)|^2  )
	- \E_n (   |\what{\ald} (\om) \what{Z} (\om) |^2  )  
	\right ) 
	|  \what{\be}_\ell (\om)  |^2 = 0
	\]
	To show that $	(\ald , \bed )$ is a Nash equilibrium for any $  \bed \in \BE$, we verify that 
	for any $\al \in \AL$ and $\be \in \BE$, 
	\[
	V_n ( \ald , \be )  \leq  V_n ( \ald, \bed ) \leq  V_n (\al,\bed) 
	\]
	The consistency of  $( \ald, \bed )$ follows from Proposition \ref{propConsistent}. 

	%
	%
	
\end{proof}

\section{Proof of Lemma \ref{frame-beta}}
\label{app:frame-beta}

\begin{proof}
	If $\al \in \AL_n$, then Proposition \ref{propExistcne} implies that $V_n (\al,\be) = 0$ for any $\be \in \BE$. 
	We next show that if $( | \what {\be}_\ell |^{\circ 2} )_{ \ell < d }$ is a basis, then $V_n (\al,\be) = 0$ implies that $\al \in \AL_n$ almost surely. 
	
	Using Parseval's identity, we write
	\begin{align*}
		r_{n,\ell} (\al,\be) &=  ( \E_n ( \| X \star \be_\ell \|^2 ) -  \E_n ( \| g_\al(Z) \star \be_\ell \|^2 )   )   \\
		&= d^{-1} (  \sum_\om \E_n (  | \what{X}(\om)   \what{\be_\ell} (\om) |^2 ) -   \sum_\om \E_n  ( | \what{g_\al(Z)}(\om)   \what{\be_\ell} (\om) |^2 )   ) \\
		&= d^{-1} \sum_\om  (   \E_n ( | \what{X}(\om) |^2  ) - \E_n ( | \what{\al}(\om) \what{Z}(\om) |^2  ) ) |\what{\be_\ell} (\om) |^2 \\
		&= \lb h ,  |\what{\be_\ell} |^{\circ 2} \rb, 
	\end{align*}
	where $ h(\om) =  d^{-1} (  \E_n ( | \what{X}(\om) |^2  ) - \E_n ( | \what{\al}(\om) \what{Z}(\om) |^2  )   ) $ for $\om \in \Om_d$. 
	
	By definition, $V_n (\al,\be) = 0$ is equivalent to $ r_{n,\ell} (\al,\be)  = 0$ for all $\ell $, i.e. 
	\[
	\lb h , | \what {\be}_\ell |^{\circ 2} \rb = 0, \quad  \forall \ell < d
	\]
	It implies that $h = 0$ because $ | \what {\be}_\ell |^{\circ 2}$ is a basis  of $\R^d$. Therefore $\al \in \AL_n$ almost surely. 
\end{proof}

\section{Proof of Theorem \ref{thm:consistenNE}}\label{proof:consistenNE}

\begin{proof}

	Assume $(\ald,\bed)$ is an equilibrium of $V_n$. 
	Consider two cases, 
	\begin{itemize}
		\item $\ald \not \in \AL_0$: we show that $\ald \in \AL_n$, and therefore by Proposition \ref{propExistcne}, $(\ald,\bed)$ is a consistent Nash equilibrium. 	
		\item $\ald \in \AL_0$: we show that $(\ald,\bed)$ can not be an equilibrium, because $\nabla_\be V_n (\ald,\bed) \neq 0$. 
	\end{itemize}

	\textbf{Case $\ald \not \in \AL_0$}: From Lemma \ref{frame-beta}, it is 
	sufficient to check that $V_n (\ald,\bed)  = 0$ almost surely. 
	
	The gradient of $V_n$ with respect to $\al$ and $\be$ are computed as in \eqref{eq:gradsVr},
	with 
	\begin{align}
		\nabla_{\al} r_{n,\ell} (\al,\be) &= -2  ( \be_\ell \star \tilde{\be}_\ell ) \star \E_n ( Z \star \tilde{Z} ) \star \al   \label{eq:gdcompal}\\
		\nabla_{\be_\ell} r_{n,\ell} (\al,\be) &= 2   (   \E_n (  X \star \tilde{X} \star \be_\ell ) -  \E_n (  g_\al( Z)  \star \widetilde{ g_\al(Z) }  \star \be_\ell   ) )  \label{eq:gdcompbe}
	\end{align}
	
	As $\nabla_\al V_n (\ald,\bed) = 0$, \eqref{eq:gdcompal} implies that 
	\begin{equation}\label{eq:LHD}
	\sum_\ell r_{n,\ell}  (\ald,\bed)  ( \bed_\ell \star \tilde{\bed_\ell} ) \star \E_n ( Z \star \tilde{Z} ) \star \ald   = 0
	\end{equation}
	We take the Fourier transform on the LHD of \eqref{eq:LHD}. It results in an equivalent equation
	\[
	\sum_\ell r_{n,\ell}   (\ald,\bed)   |  \what{ \bed_\ell} (\om) |^2    \E_n ( | \what{Z}(\om)  |^2 )   \what{\ald}  (\om)  = 0, \quad \forall \om \in \Om_d
	\]	
	Since for any $\om \in \Om_d$, $  \E_n ( | \what{Z}(\om)  |^2 )   \what{\ald}  (\om)  \neq 0$ almost surely, it follows that almost surely
	\begin{equation}\label{eq:sum0}
	\sum_\ell r_{n,\ell}    (\ald,\bed)  |  \what{ \bed_\ell} (\om) |^2 = 0 , \quad \forall \om \in \Om_d . 
	\end{equation}
	
	As $ ( | \what { \bed_\ell } |^{\circ 2} )_{\ell < d}$ is a basis on $\R^d$, \eqref{eq:sum0} implies that 
	$r_{n,\ell} (\ald,\bed) = 0$ for all $\ell $, i.e. $V_n (\ald,\bed) = 0$. 
	
	\textbf{Case $\ald  \in \AL_0$}: by the definition of $\AL_0$, there exists $\om_0 \in \Om_d $ such that $\what{\ald} (\om_0) = 0$. 
	
	Firstly, we have almost surely
	\[
	\AL_n  \cap \AL_0 =  \emptyset
	\]
	because $\al \in \AL_n$ implies that almost surely
	\[
	|\what{\al}(\om) |^2 = | \what{\alb}(\om) |^2 \E_n (  |\what{\bar{Z}}(\om) |^2 )  /  \E_n (  |\what{Z}(\om) |^2 ) \neq 0, \forall \om  \in \Om_d
	\]
	
	As $\ald \in \AL_0$, this implies that almost surely $\ald \not \in \AL_n$
	Then from Lemma \ref{frame-beta}, $V_n (\ald,\bed) \neq 0$ almost surely,
	so that there exists $\ell_0 < d$ such that $r_{n,\ell_0} (\ald,\bed)  \neq 0$. 
	We next verify that the gradient of $V_n$ with respect to $\be_{\ell_0}$ is non-zero.
	From  \eqref{eq:gradsVr} and \eqref{eq:gdcompbe}
	\begin{align}
		\nabla_{\be_{\ell_0}} V_n (\ald,\bed)  &= 2 r_{n,{\ell_0}} (\ald,\bed)  \nabla_{\be_{\ell_0} } r_{n,\ell_0} (\ald,\bed)  \nonumber \\
		&= r_{n,{\ell_0}} (\ald,\bed)  		(   \E_n (  X \star \tilde{X} \star \bed_{\ell_0} ) -  \E_n (  g_{\ald}( Z)  \star \widetilde{ g_{\ald} (Z) }  \star \bed_{\ell_0}   ) ) \label{eq:gradBEcomp}
	\end{align}
	
	As $r_{n,\ell_0} (\ald,\bed)  \neq 0$, taking the Fourier transform of \eqref{eq:gradBEcomp}, $ \nabla_{\be_{\ell_0}} V_n (\ald,\bed) = 0 $ implies that 
	\[
	\E_n ( | \what{X} (\om) |^2 )  \what{\be}_{\ell_0} (\om)  =
	\E_n ( | \what{g_{\ald}(Z)} (\om) |^2 )   \what{\be}_{\ell_0} (\om)   , \forall \om \in \Om_d
	\]
	
	As $\be_{\ell_0} \not \in \AL_0$, it implies that almost surely $   \E_n ( | \what{X} (\om_0) |^2 )  = \E_n ( | \what{g_{\ald} (Z)} (\om_0) |^2 )  =  0 $ because $\what{\ald} (\om_0) = 0$. This is contradictory because  $\alb \not \in \AL_0$ implies that $   \E_n ( | \what{X} (\om_0) |^2 ) $ is almost surely non-zero.

	%
	%
	%
	%
	%
	%
	%
	%
	
\end{proof}

\section{Additional numerical results}\label{app:num}


\begin{figure}
	\centering
	\includegraphics[scale=0.33]{./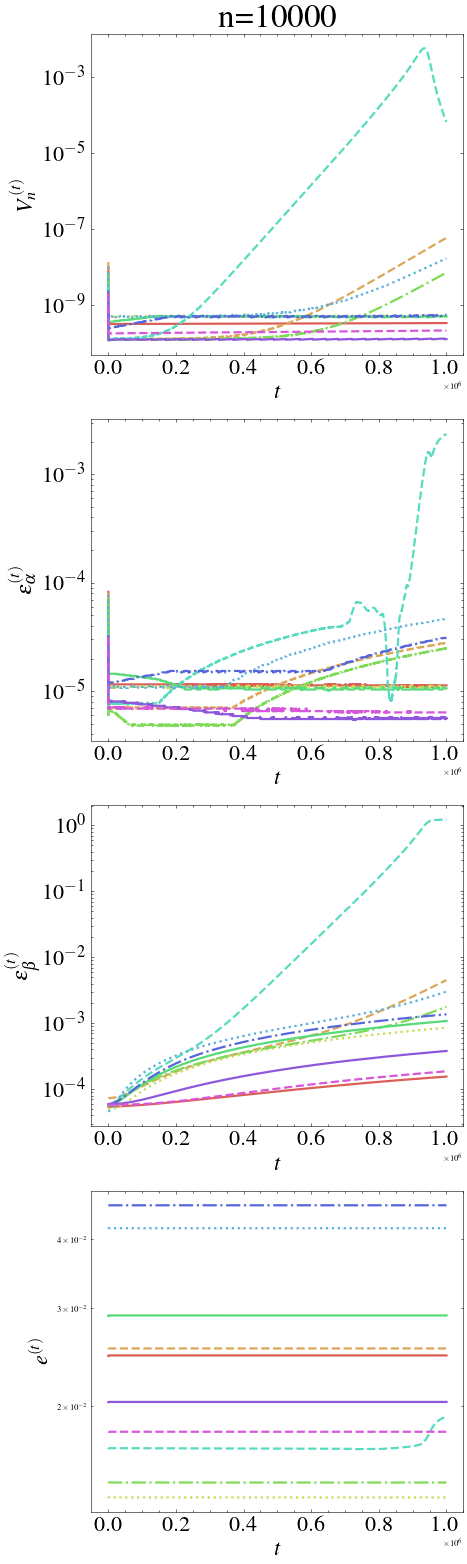} 	
	\includegraphics[scale=0.33]{./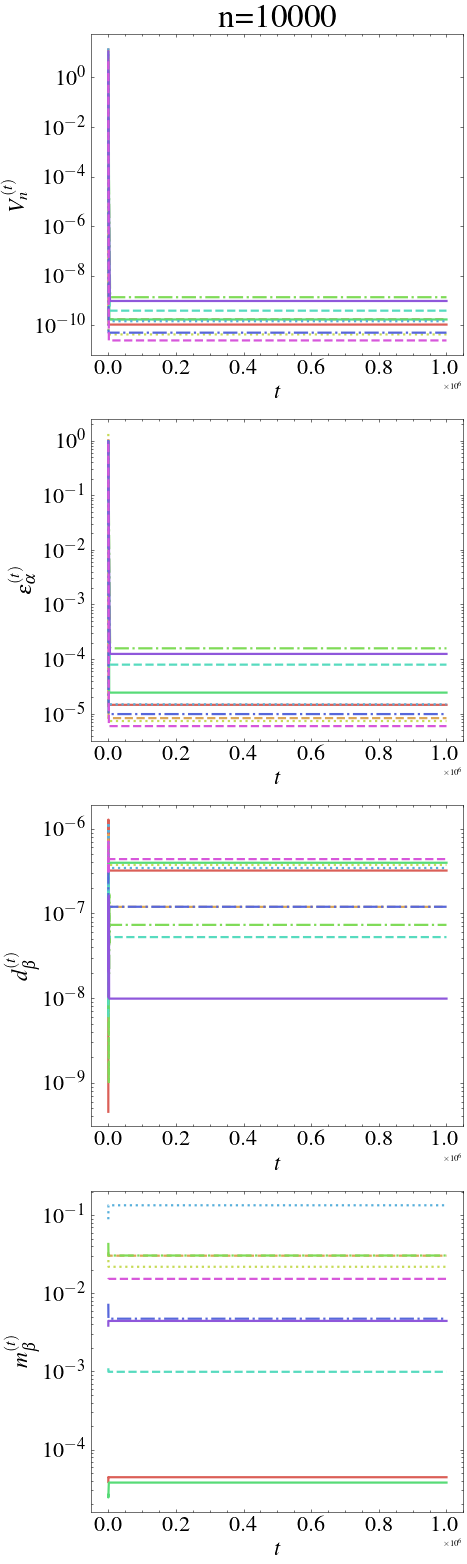} 
	\caption{
		The local stability (left) and global convergence (right) of the continuous-time GDA as a function of the number of iterations ($10^6$ in total). 
		Left: the dynamics of $V_n^{(t)} = V_n(\al^{(t)},\be^{(t)})$, $\epsilon_{\al}^{(t)}$,  $\epsilon_{\be}^{(t)}$ and $e^{(t)} = \| \Sigma - \Sigma_{ \al^{(t)}} \|$ on the complex discriminator.
		Right: the dynamics of $V_n^{(t)} = V_n(\al^{(t)},\be^{(t)})$, $\epsilon_{\al}^{(t)}$,  $d_{\be}^{(t)}$ and $m_{\be}^{(t)}$ on the convolutional discriminator. 
	}
	\label{fig:convdynamicsLong}
\end{figure}



\end{document}